\documentclass[twoside]{article}

\usepackage[accepted]{aistats2026}
%
%

%
%


\usepackage[round]{natbib}

\bibliographystyle{apalike}

\usepackage{hyperref}
\usepackage{xcolor}
\usepackage{algorithm}
\usepackage{algpseudocode}
\usepackage{booktabs,subcaption,threeparttable}
\usepackage{graphicx}
\usepackage{caption}
\usepackage{subcaption}
\usepackage{tikz}
\usetikzlibrary{automata}
\usetikzlibrary{positioning}  
\usetikzlibrary{arrows}       
\tikzset{
  node distance=1.5cm,
  every state/.style={
    semithick,
    fill=gray!5,
    minimum size=10mm, 
    inner sep=0pt,     
    align=center
  },
  initial text={},
  double distance=3pt,
  every edge/.style={draw,->,>=stealth',auto,semithick}
}

\usepackage{amsmath}
\usepackage{amssymb}
\usepackage{amsthm}
\usepackage{mathtools}

\theoremstyle{definition}
\newtheorem{definition}{Definition}[section]
\newtheorem{assumption}{Assumption}[section]
\theoremstyle{plain}
\newtheorem{theorem}{Theorem}[section]

\newtheorem{lemma}{Lemma}[section]
\newtheorem{corollary}{Corollary}[section]

\newcommand{\bftab}{\fontseries{b}\selectfont}
\newcommand\dunderline[2][.4pt]{%
  \raisebox{-#1}{\underline{\raisebox{#1}{\smash{\underline{#2}}}}}}

\begin{document}

%

%

\twocolumn[

\aistatstitle{Counterfactually Fair Conformal Prediction}

\aistatsauthor{Ozgur Guldogan \And Neeraj Sarna \And  Yuanyuan Li \And Michael Berger}

\aistatsaddress{UC Santa Barbara \And  Munich RE \And Munich RE  \And Munich RE} ]

\begin{abstract}
While counterfactual fairness of point predictors is well studied, its extension to prediction \emph{sets}---central to fair decision-making under uncertainty---remains underexplored. 
On the other hand, conformal prediction (CP) provides efficient, distribution-free, finite-sample valid prediction sets, yet does not ensure counterfactual fairness.
We close this gap by developing \textit{Counterfactually Fair Conformal Prediction} (CF-CP) that produces counterfactually fair prediction sets. 
Through symmetrization of conformity scores across protected-attribute interventions, we prove that CF-CP results in counterfactually fair prediction sets while maintaining the marginal coverage property. 
Furthermore, we empirically demonstrate that on both synthetic and real datasets, across regression and classification tasks, CF-CP achieves the desired counterfactual fairness and meets the target coverage rate with minimal increase in prediction set size.
CF-CP offers a simple, training-free route to counterfactually fair uncertainty quantification.
\end{abstract}

\section{INTRODUCTION}
\label{sec:intro}
As machine learning models are being increasingly used in high-stakes applications like healthcare, criminal justice, test score estimation, etc.; it is crucial to quantify and mitigate the associated bias \citep{barocas2016big,podesta2014big,corbett2023measure}. 
This problem has been extensively explored in the past for both regression and classification problems \citep{berk2018fairness,chouldechova2017fair,agarwal2019fair}. 
The general idea is to ensure that subpopulations are treated equally (group fairness) or focus on individual fairness such as counterfactual fairness~\citep{kusner2017counterfactual}. 
Although these fairness mitigation techniques are effective, most of them focus on point predictions and fail to capture the inherent uncertainty in ML models.
Moreover, recent work~\citep{cresswell2024conformal} has shown that conformal prediction sets can improve decision quality in high-stakes tasks, underscoring the practical importance of uncertainty-aware prediction sets beyond point predictors.

An appealing methodology that promotes model fairness under uncertainty is the idea of \textit{fair prediction sets} \citep{romano2020malice,liu2022conformalized}. 
These methods provide marginal coverage guarantees while enforcing a group-level fairness criterion. 
Typically, they adapt classical group-fairness metrics for point predictors (e.g., demographic parity) to the set-valued setting via inclusion-based definitions \citep{vadlamani2025a}.
To the best of our knowledge, all these past works on fair prediction sets have focused on group-level fairness.
Furthermore, recent human-subject studies~\citep{cresswell2025conformal} suggest that enforcing certain group-fairness criteria for prediction sets, such as equalized coverage, may inadvertently increase disparate impact, highlighting the need for alternative fairness notions for conformal prediction beyond group-level criteria.
Thus, although group-level fairness is a powerful tool, certain applications demand individual-level guarantees.
Consider healthcare, for instance, where the intention could be to ensure a patient’s recommended treatment is not systematically influenced by protected attributes; such concerns have been widely discussed in the context of fair machine learning for laboratory medicine~\citep{azimi2023optimizing}.
A method can satisfy group-level parity yet still produce biased recommendations for particular individuals, which could be undesirable for such an application.
Furthermore, as noted by~\citet{kusner2017counterfactual}, the notion of counterfactual fairness better captures the causality of bias in ML algorithms.

The above motivates prediction sets with individual-level fairness. 
We extend counterfactual fairness notion to set-valued predictors: for a given individual, the prediction set should remain unchanged under counterfactual interventions on the protected attribute. 
For example, consider constructing a prediction set for a particular student’s SAT score; a counterfactually fair prediction set would be identical for that student whether we intervene to change their protected attribute (e.g., gender) or not.

We focus on conformal prediction (CP) and develop a framework called \emph{Counterfactually Fair Conformal Prediction} (CF-CP). 
Concretely, we compute conformity scores under each intervention on the protected attribute and then aggregate these per-intervention scores using a symmetric function to obtain a counterfactually fair score. 
We then use these scores with the standard split CP procedure.
We show that under an invertible structural causal model (SCM) and the exchangeability assumptions, CF-CP guarantees both marginal coverage and set-level counterfactual fairness. 
We emphasize that the latter holds even when the underlying point predictor is not counterfactually fair.
Empirically, we evaluate CF-CP on synthetic and real-world datasets, showing that it significantly reduces unfairness in the prediction sets compared to standard split CP, with only a modest increase in average set size.

We note that prediction sets generated by CP are not inherently fair. They can systematically differ between protected groups or change when we hypothetically alter an individual's sensitive attributes \citep{romano2020malice,vadlamani2025a}. 
CF-CP alters the scoring function of standard CP such that counterfactual fairness and coverage is ensured.

Our key contributions are summarized as follows:
\begin{itemize}
    \item We extend the notion of counterfactual fairness to set-valued predictors and propose a post-training conformal prediction procedure (CF-CP) that symmetrizes conformity scores across protected attribute interventions.
    \item We prove that under an invertible SCM and exchangeability assumptions, CF-CP guarantees both marginal coverage and set-level counterfactual fairness.
    \item We empirically demonstrate that CF-CP effectively reduces counterfactual unfairness in prediction sets on synthetic and real-world datasets while maintaining desired coverage.
\end{itemize}

\subsection{Related Work}
\label{sec:related-work}

Fairness in conformal prediction has primarily focused on group fairness notions~\citep{romano2020malice}.
\citet{vadlamani2025a} propose a method that adapts common group fairness notions (e.g., demographic parity, equalized odds) to conformal prediction without requiring group membership during test time.
One line of work tailors CP-based regression to group-fairness goals, specifically Demographic Parity and Equal Opportunity \citep{liu2022conformalized,wang2023equal}. 
A separate line advances CP methods that seek equalized coverage across groups \citep{lu2022fair}.
Our work differs in that we focus on individual-level counterfactual fairness for set-valued predictors.
Moreover, recent human-subject studies by~\citet{cresswell2025conformal} show that enforcing equalized coverage across groups can inadvertently increase disparate impact in decision-making, further underscoring the need for alternative fairness notions for prediction sets.
There is also a growing body of work that combines causality with uncertainty quantification, though not in the context of fairness.
\citet{lei2021conformal, schroder2024conformal, chen2024conformal} study conformal prediction for potential outcomes in causal inference.
However, these methods do not consider fairness.
We next summarize counterfactual fairness methods for point predictors.

Counterfactual fairness (CF), introduced by \citet{kusner2017counterfactual}, is an individual-level fairness notion that requires a predictor's output to remain invariant under counterfactual changes to protected attributes.
There are several strategies to design CF predictors.
\citet{kusner2017counterfactual} propose to learn a predictor only on the features that are not descendants of the protected attribute $A$ in the causal graph.
\citet{zuo2023counterfactually} propose a method that achieves CF by mixing representations of factual and counterfactual instances before feeding to a predictor.
\citet{zhou2024counterfactual} propose a method that combines both factual and counterfactual predictions weighted by the probability of the corresponding protected attribute value.
\citet{di2020counterfactual,garg2019counterfactual} study regularizers that penalize differences in predictions between factual and counterfactual instances.
These methods focus on point predictors, while we extend CF to set-valued predictors and design a conformal prediction procedure that enforces this property.

\paragraph{Paper organization.}
The rest of the paper is organized as follows. 
In Section~\ref{sec:background}, we review background on conformal prediction. 
Section~\ref{sec:problem} introduces counterfactual fairness, first for point predictors and then extended to prediction sets. 
In Section~\ref{sec:method}, we present our CF-CP method, discuss its computational properties, and establish theoretical guarantees. 
Section~\ref{sec:experiments} reports empirical results on synthetic and real datasets. 
We conclude in Section~\ref{sec:conclusion}.
\section{BACKGROUND: CONFORMAL PREDICTION}
\label{sec:background}

\paragraph{Notation and Setup.}
We consider observed features $X \in \mathcal{X}$, a protected attribute $A \in \mathcal{A}$, and a label $Y \in \mathcal{Y}$. 
Throughout the paper, we assume $\mathcal{A}$ is finite (e.g., binary or categorical).
We write a (possibly randomized) predictor as $\hat{f}:\mathcal{X}\times\mathcal{A}\to\mathcal{S}$, where $\mathcal{S}=\mathbb{R}$ for regression or $\mathcal{S}=\Delta^{K-1}$ for $K$-class classification. 
A \emph{prediction set} procedure maps $(X,A)$ to a subset $\mathcal{C}(X,A)\subseteq\mathcal{Y}$.

For causal notation, we adopt structural causal models (SCMs)~\citep{pearl2009causality}.
An SCM is a tuple $(U,V,F)$, where $U$ are exogenous variables, $V=\{X,A,Y\}$ are endogenous variables, and $F$ are structural equations that determine each variable in $V$ as a function of its parents and some exogenous noise.
Counterfactuals consider the following question: \textit{``What would the value of observed variables be, had we intervened to set $A$ to a different value?''}
We denote by $X_{A\leftarrow a}$ the counterfactual version of $X$ under the intervention $do(A=a)$, and similarly for $Y_{A\leftarrow a}$.
For more details on SCMs and counterfactuals, see \citet{pearl2009causality}.

\subsection{Split Conformal Prediction}
\label{subsec:split-cp}
Here we briefly describe conformal prediction (CP); see \citet{angelopoulos2023conformal} for a recent survey.
Conformal prediction (CP) constructs prediction sets with a pre-specified probability $1-\alpha$ of containing the true label, without distributional assumptions beyond exchangeability \citep{vovk2005algorithmic, shafer2008tutorial}.
For simplicity, we focus on the popular \emph{split} (or inductive) CP variant \citep{papadopoulos2002inductive}. 
The protected attribute $A$ is not needed for standard CP, but we include it in the notation for clarity and to make explicit the dependence used in later sections.
Suppose we are given a pre-trained predictor $\hat{f}$ (e.g., a neural network) and a calibration dataset of size $n$, $\{(X_i,A_i,Y_i)\}_{i=1}^n$, independent of $\hat{f}$.
The goal is to construct a prediction set $\mathcal{C}_{\alpha}(X,A)$ that covers the true label $Y$ with probability of at least $1-\alpha$ over the randomness of $(X,A,Y)$ and the calibration dataset. 
To this end, we define a \emph{conformity score} function $s:\mathcal{X}\times\mathcal{A}\times\mathcal{Y}\to\mathbb{R}$ that measures how well a candidate label $y$ conforms with the features $(x,a)$ under the predictor $\hat{f}$.
Without loss of generality, we use a negative-oriented score, where smaller values indicate better conformity.
The score function inherently depends on the predictor $\hat{f}$ but for ease of notation we omit this dependence.
Then, one computes conformity scores on the calibration set:
\begin{equation}
    S_i = s\left(X_i,A_i,Y_i\right), \qquad i\in\mathcal{I}_{\mathrm{cal}},
\end{equation}
where $\mathcal{I}_{\mathrm{cal}}$ denotes the index set of the calibration data.
Let $\hat{q}_{1-\alpha}$ be the $(1-\alpha)(1+1/|\mathcal{I}_{\mathrm{cal}}|)$-empirical quantile of $\{S_i\}_{i\in\mathcal{I}_{\mathrm{cal}}}$. 
For a test point $(X_{n+1},A_{n+1})$, the split conformal prediction set is computed as follows:
\begin{equation}
    \mathcal{C}_{\alpha}(X_{n+1},A_{n+1}) = \left\{y\in\mathcal{Y}: s(X_{n+1},A_{n+1},y) \le \hat{q}_{1-\alpha}\right\}
\end{equation}
If $(X_i,A_i,Y_i)$ and $(X_{n+1},A_{n+1},Y_{n+1})$ are exchangeable, then $\mathcal{C}_{\alpha}(X_{n+1},A_{n+1})$ satisfies the \emph{marginal coverage} guarantee:
\begin{equation}
    \Pr\big\{Y_{n+1} \in \mathcal{C}_{\alpha}(X_{n+1},A_{n+1})\big\} \ge 1-\alpha.
\end{equation}
If the score values are almost surely distinct, the probability above is upper bounded by $1-\alpha + 1/(n+1)$.
A detailed proof of this result can be found in \citet{lei2018distribution}.
The coverage guarantee is marginal, and the randomness is over both the calibration and test points.

\section{COUNTERFACTUAL FAIRNESS}
\label{sec:problem}

In this section, we formalize counterfactual fairness for point predictors and extend this definition to prediction set procedures.
We also briefly recall past works that ensure counterfactual fairness for point predictors.

\subsection{Point predictors}
 Building on the causal framework of \citet{pearl2009causality}, \citet{kusner2017counterfactual} introduced the notion of counterfactual fairness (CF) which requires that a predictor's output remains invariant under counterfactual interventions on protected attributes. More formally:
\begin{definition}[Counterfactual Fairness---point predictor]
\label{def:cf}
A predictor $\hat{f}:\mathcal{X}\times\mathcal{A}\to\mathcal{S}$ is \emph{counterfactually fair} (CF) if, for all $a,a'\in\mathcal{A}$ and $s\in\mathcal{S}$,
\begin{multline}
\label{eq:cf-prob-def}
    \Pr\left\{ \hat{f}(X_{A\leftarrow a},a) = s \mid A=a, U=u \right\} =\\
    \Pr\left\{ \hat{f}(X_{A\leftarrow a'},a') = s \mid A=a, U=u \right\}
\end{multline}
\end{definition}
That is, after intervening on $A$ while holding the latent $U$ fixed, the distribution of predictions should not change. 
Equivalently, $\hat f(X_{A\leftarrow a},a)\mid(A=a,U=u)$ and $\hat f(X_{A\leftarrow a'},a')\mid(A=a,U=u)$ have the same conditional distribution.

Several strategies exist to design CF predictors $\hat{f}$ satisfying \eqref{eq:cf-prob-def}.
\citet{kusner2017counterfactual} propose to learn a predictor on only on the features that are not descendants of $A$ in the causal graph, then no causal path exists from $A$ to $\hat{f}(X,A)$, ensuring CF.
This approach may be impractical if all features are descendants of $A$.
Counterfactually Fair Representations (CFR) \citep{zuo2023counterfactually} satisfies fairness by learning a latent representation $Z$ for factual and counterfactual instances, then combines them via a permutation-invariant operation (e.g., mean) before feeding to a predictor.
\citet{zhou2024counterfactual} propose a method that satisfies CF by combining both factual and counterfactual predictions weighted by the probability of the corresponding protected attribute value, called Plug-in Counterfactual Fairness (PCF).

\subsection{Prediction sets}
We extend counterfactual fairness notion from \emph{point predictors} to \emph{prediction sets} by requiring that the set produced for an individual remain invariant under counterfactual interventions on the protected attribute. 
We call this property \emph{set-level counterfactual fairness}.
\begin{definition}[Counterfactual fairness---prediction sets]
\label{def:cfcp}
A set-valued procedure $\mathcal{C}_\alpha$ is \emph{counterfactually fair}
if, for all $a,a'\in\mathcal{A}$ and $y\in\mathcal{Y}$,
\begin{multline}
\label{eq:cf-set-def}
    \Pr\left\{ y \in \mathcal{C}_\alpha(X_{A\leftarrow a},a) \mid A=a, U=u \right\} = \\
    \Pr\left\{ y \in \mathcal{C}_\alpha(X_{A\leftarrow a'},a') \mid A=a, U=u \right\}.
\end{multline}
\end{definition}
That is, for any individual with latent attributes $U=u$ and protected attribute $A=a$, the probability of including any label $y$ in the prediction set is invariant to counterfactual changes in $A$.

\section{COUNTERFACTUALLY FAIR CONFORMAL PREDICTION}
\label{sec:method}

We aim to design a split-CP procedure that preserves marginal coverage while enforcing set-level counterfactual invariance. Precisely, given a pre-trained (possibly counterfactually unfair) predictor $\hat f$, a calibration set $\mathcal{I}_{\mathrm{cal}}$, and a target level $1-\alpha$,
construct a conformal prediction procedure $\mathcal{C}_\alpha$ that (i) preserves the standard marginal coverage guarantee and (ii) satisfies set-level counterfactual fairness (Def.~\ref{def:cfcp}).

In this section, we propose an algorithm that satisfies these two properties. 
Then we compare CF-CP to baselines and discuss the differences.
Finally, we demonstrate that this procedure results in counterfactually fair prediction sets along with marginal coverage guarantees.

\subsection{Score Symmetrization and CF-CP Sets}
\label{subsec:method-intro}
Let $s:\mathcal{X}\times\mathcal{A}\times\mathcal{Y}\to\mathbb{R}$ be any conformity score used by split CP with a base predictor $\hat f$.
We define the \emph{counterfactual score symmetrization} by aggregating scores over protected-attribute interventions:
\begin{equation}
\label{eq:s-sym}
    s_{\mathrm{cf}}(x,a,y) = \mathrm{Agg}\left\{s\left(x_{A\leftarrow a'},a',y\right) : a'\in\mathcal{A}\right\}
\end{equation}
where $\mathrm{Agg}$ is a symmetric operator that is invariant to permutations of its inputs and $x_{A\leftarrow a'}$ denotes the counterfactual features obtained by setting $A{:=}a'$. 
Examples of the operator $\mathrm{Agg}$ include mean, max and min operations---we elaborate more on this below.

We then run split CP \emph{with} $s_{\mathrm{cf}}$: on calibration points $\{(X_i,A_i,Y_i)\}_{i\in\mathcal{I}_{\mathrm{cal}}}$ compute $S^{\mathrm{cf}}_i = s_{\mathrm{cf}}(X_i,A_i,Y_i)$, take the usual quantile $\hat q_{1-\alpha}$ with finite sample correction.
At test time, for a new point $(x,a)$, the CF-CP set is
\begin{equation}
\label{eq:cfcp-set}
    \mathcal{C}_\alpha^{\mathrm{CF}}(x,a) = \left\{y\in\mathcal{Y} : s_{\mathrm{cf}}(x,a,y) \le \hat q_{1-\alpha}\right\}.
\end{equation}
This procedure is summarized in Algorithm~\ref{alg:cfcp}.

\begin{algorithm}[t]
\caption{Split CF-CP via score symmetrization}
\label{alg:cfcp}
\begin{algorithmic}[1]
    \Require Base predictor $\hat f$, score $s(\cdot)$, calibration set $\{(X_i,A_i,Y_i)\}_{i\in\mathcal{I}_{\mathrm{cal}}}$, target $1-\alpha$, aggregator $\mathrm{Agg}$
    \State Define \begin{equation*}s_{\mathrm{cf}}(x,a,y) \leftarrow \mathrm{Agg}\left\{s\left(x_{A\leftarrow a'}, a', y\right) : a'\in\mathcal{A}\right\}\end{equation*}
    \State \textbf{For each} $i\in\mathcal{I}_{\mathrm{cal}}$: compute $S^{\mathrm{cf}}_i \leftarrow s_{\mathrm{cf}}(X_i,A_i,Y_i)$
    \State Set $\hat q_{1-\alpha} \leftarrow \mathrm{Quantile}_{\lceil (|\mathcal{I}_{\mathrm{cal}}|+1)(1-\alpha)\rceil/|\mathcal{I}_{\mathrm{cal}}|}\left(\{S^{\mathrm{cf}}_i\}\right)$
    \State \textbf{At test time} for $(x,a)$, form
    \begin{equation*}\mathcal{C}_\alpha^{\mathrm{CF}}(x,a) \leftarrow \left\{ y: s_{\mathrm{cf}}(x,a,y) \le \hat q_{1-\alpha}\right\}\end{equation*}
    \Ensure Prediction set $\mathcal{C}_\alpha^{\mathrm{CF}}(x,a)$
\end{algorithmic}
\end{algorithm}

\paragraph{Aggregator choices.} The only requirement on $\mathrm{Agg}$ is to be permutation-invariant--i.e., the output of the aggregator should not change with the order of the inputs.
In this paper, we consider three simple choices: (i) \texttt{mean}, (ii) \texttt{max}, and (iii) \texttt{min}.
During the calibration threshold learning (step 3 of Algorithm~\ref{alg:cfcp}), \texttt{max} leads to larger thresholds $\hat q_{1-\alpha}$, because it takes the maximum score across interventions for each sample in calibration dataset.
However, at test time (step 4 of Algorithm~\ref{alg:cfcp}), the \texttt{max} aggregator leads to intersection of the per-intervention sets, which balances out the effect of a larger threshold.
Conversely, \texttt{min} leads to smaller thresholds at the calibration phase, but the test-time sets are unions of per-intervention sets, which can lead to larger prediction sets.
The \texttt{mean} aggregator is a compromise between these cases in both calibration and test phases.
While we focus on these simple aggregators, more complex choices are possible as long as they are permutation-invariant.

\paragraph{Computational overhead.}
Compared to standard split-CP, CF-CP introduces two sources of additional overhead: (i) generating counterfactual interventions, and (ii) computing conformity scores for each intervention $a \in \mathcal{A}$.
Overall, CF-CP incurs an additional $\mathcal{O}(|\mathcal{A}|)$ factor compared to split-CP.
Thus, when the protected attribute has small cardinality, the computational cost of CF-CP remains close to that of split-CP.

\subsection{Baselines and Comparisons}
We compare CF-CP to two types of baselines that either retrain a fair point predictor or apply post-hoc set manipulations, and we note their limitations.

\paragraph{Retraining-based approaches.}
One option is to first obtain a point predictor that is counterfactually fair—e.g., by excluding descendants of $A$ in the SCM or via counterfactually fair representations (CFR) / plug-in counterfactual fairness (PCF)—and then apply standard split-CP on its score. 
This can yield set-level invariance by construction, but it typically trades predictive accuracy for fairness and thus inflates set sizes to maintain coverage; it also requires retraining, which may be infeasible. 
In addition, PCF variant relies on access to the probability distribution of the protected attribute, $\Pr(A)$, for post-training adjustments.

\paragraph{Post-hoc union over interventions.}
A second baseline constructs separate CP sets under each protected-attribute intervention and unions them:
\begin{equation}
    \mathcal{C}^{\cup}_\alpha(x):=\bigcup_{a'\in\mathcal{A}} \mathcal{C}_\alpha(x_{A\leftarrow a'}, a')
\end{equation}
This is again counterfactually fair by construction, but the union systematically enlarges sets and pushes effective coverage above the nominal $1-\alpha$.

\paragraph{How CF-CP differs.}
CF-CP calibrates a \emph{single, symmetrized} conformity score that aggregates per-intervention scores at the instance level. 
In contrast to the post-hoc union, CF-CP avoids the union-induced overcoverage while still ensuring set-level counterfactual invariance.
Compared to retraining-based approaches (CFR/PCF or descendants-exclusion), CF-CP is training-free, does not require modifying features or imposing invariance penalties, and does not need access to $\Pr(A)$; all aggregation occurs on scores, not on model parameters or population distributions.

\subsection{Theoretical Guarantees}
\label{subsec:theory}

In this section we show that CF-CP satisfies the set-level counterfactual fairness of Definition~\ref{def:cfcp} while preserving the marginal coverage guarantee of split CP. First, we state the main assumption needed for our theoretical analysis.

\begin{assumption}
    The structural mapping between $X$ and $U$ is invertible for given $A$. Also, $U$ and $A$ are independent. 
    \label{assump:invertible}
\end{assumption}

The independence assumption is standard in the counterfactual fairness literature.
Even though the invertibility assumption may seem strong, it is commonly used in the counterfactual estimation and fairness literature \citep{nasr2023counterfactual,zhou2024towards,zhou2024counterfactual}.
We also note that after relaxing the assumption, one can still achieve approximate counterfactual fairness, as we show empirically in Section~\ref{sec:experiments}.

Assumption~\ref{assump:invertible} asserts that there exists a mapping $g:\mathcal{U}\times\mathcal{A}\to\mathcal{X}$ such that $X = g(U,A)$ and, for each $a\in\mathcal{A}$, the mapping $g_a(u):=g(u,a)$ is invertible with inverse $g_a^{-1}:\mathcal{X}\to\mathcal{U}$.
For any target $a'\in\mathcal{A}$, the corresponding counterfactual features are therefore
\begin{equation}
    x_{A\leftarrow a'} = g_{a'}(U) = g_{a'}\left(g_a^{-1}(x)\right).
\end{equation}
Also, the symmetrized score can be written as:
\begin{equation}
    s_{\mathrm{cf}}(x,a,y)
    =
    \mathrm{Agg}\left\{s\left(g_{a'}(g_a^{-1}(x)),a',y\right) : a'\in\mathcal{A}\right\}
\end{equation}
This invertibility makes precise that the multiset $\{x_{A\leftarrow a'}\}_{a'}$ depends only on the shared latent $U$, not on the factual $a$.
This results in invariance of the symmetrized score.
We formalize this observation in the following lemma.

\begin{lemma}[Invariance of the symmetrized score]
\label{lem:score-invariance}
Let Assumption~\ref{assump:invertible} hold.
Fix any $y\in\mathcal{Y}$.
Then for all $a,a'\in\mathcal{A}$,
\begin{equation}
\label{eq:sym-score-invariance}
    s_{\mathrm{cf}}\left(X_{A\leftarrow a},a,y\right) = s_{\mathrm{cf}}\left(X_{A\leftarrow a'},a',y\right).
\end{equation}
\end{lemma}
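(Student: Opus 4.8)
The plan is to exploit the invertibility in Assumption~\ref{assump:invertible} to rewrite each counterfactual feature vector purely in terms of the shared exogenous latent, after which both sides of \eqref{eq:sym-score-invariance} collapse to the very same aggregated collection of scores. The whole statement is a pointwise (deterministic) identity in the SCM, so no probabilistic machinery is needed; in fact the independence part of Assumption~\ref{assump:invertible} plays no role here and is reserved for the later coverage and fairness statements.

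First I would invoke the standard three-step counterfactual semantics: abduction fixes a single latent $U=u$ from the factual instance, and the counterfactuals $X_{A\leftarrow a}$ and $X_{A\leftarrow a'}$ are then obtained by acting on $A$ while holding this same $U$ fixed. Under Assumption~\ref{assump:invertible} this gives $X_{A\leftarrow a}=g_a(U)$ and $X_{A\leftarrow a'}=g_{a'}(U)$, with each $g_b$ invertible.

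Next I would substitute $x=X_{A\leftarrow a}=g_a(U)$ together with second argument $a$ into the definition of $s_{\mathrm{cf}}$ in \eqref{eq:s-sym}. The key cancellation is that the inner inversion is applied with exactly the second argument, so $g_a^{-1}(x)=g_a^{-1}(g_a(U))=U$; it is essential that the intervention target and the second slot of $s_{\mathrm{cf}}$ match, which is precisely the form in which the lemma pairs them. This yields
\[
s_{\mathrm{cf}}\bigl(X_{A\leftarrow a},a,y\bigr)=\mathrm{Agg}\bigl\{\,s(g_{a''}(U),a'',y):a''\in\mathcal{A}\,\bigr\}.
\]
Repeating the identical computation starting from $(X_{A\leftarrow a'},a')$ and using $g_{a'}^{-1}(g_{a'}(U))=U$ produces the same right-hand side, so the two aggregated quantities coincide.

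Finally I would conclude that the collections $\{\,s(g_{a''}(U),a'',y):a''\in\mathcal{A}\,\}$ generated from either starting point are literally identical---indexed the same way by $a''\in\mathcal{A}$ and depending only on $U$ and $y$, never on the factual or intervened label---so applying $\mathrm{Agg}$ returns equal outputs. I do not expect a genuine obstacle: the permutation-invariance of $\mathrm{Agg}$ is not even required for the bare equality (though it is what lets the construction be read as a true symmetrization), and the only point demanding care is the bookkeeping that abduction yields one shared $U$ and that $g_a^{-1}$ is composed with the matching $g_a$, so that $g_a^{-1}\circ g_a$ reduces to the identity rather than to a cross term $g_a^{-1}\circ g_{a'}$.
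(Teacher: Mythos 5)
Your proof is correct and follows essentially the same route as the paper's: abduction under Assumption~\ref{assump:invertible} recovers the shared latent $U$ via $g_a^{-1}$ (the cancellation $g_a^{-1}(g_a(U))=U$), every counterfactual feature becomes $g_{a''}(U)$ regardless of the starting pair, and the aggregated score collections therefore coincide. The only cosmetic difference is that the paper phrases the coincidence as equality of multisets up to permutation and then invokes the permutation-invariance of $\mathrm{Agg}$, whereas you note that the families $\left\{s\left(g_{a''}(U),a'',y\right) : a''\in\mathcal{A}\right\}$ are literally identical as indexed collections; both readings are fine.
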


The proof is given in the Appendix~\ref{sec:proof-lem-score-invariance}.
The key idea is that, under Assumption~\ref{assump:invertible}, the latent $U$ can be uniquely recovered from the factual features $X$ and protected attribute $A=a$ via the inverse function $g_a^{-1}$.
This allows us to express all counterfactual features $X_{A\leftarrow a'}$ as functions of $X_{A\leftarrow a}$ and $A=a$, and vice versa.
The symmetry of $s_{\mathrm{cf}}$ then follows from the permutation-invariance of $\mathrm{Agg}$.

This score-level invariance leads to the desired set-level counterfactual fairness.
Since the score value is the same across counterfactuals, the prediction sets formed by thresholding the score at $\hat q_{1-\alpha}$ must also coincide.

\begin{corollary}[Set-level counterfactual invariance]
\label{cor:set-invariance}
Under the conditions of Lemma~\ref{lem:score-invariance}, for all $a,a'\in\mathcal{A}$,
\begin{equation}
\label{eq:set-invariance}
    \mathcal{C}_\alpha^{\mathrm{CF}}\big(X_{A\leftarrow a},a\big) = \mathcal{C}_\alpha^{\mathrm{CF}}\big(X_{A\leftarrow a'},a'\big).
\end{equation}
\end{corollary}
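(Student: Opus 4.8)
The plan is to reduce the set equality \eqref{eq:set-invariance} to the pointwise score identity already established in Lemma~\ref{lem:score-invariance}. First I would recall the definition of the CF-CP set from \eqref{eq:cfcp-set}: for any $(x,a)$, membership $y\in\mathcal{C}_\alpha^{\mathrm{CF}}(x,a)$ is equivalent to the single scalar inequality $s_{\mathrm{cf}}(x,a,y)\le\hat q_{1-\alpha}$. Crucially, the threshold $\hat q_{1-\alpha}$ is a constant determined entirely by the calibration scores $\{S^{\mathrm{cf}}_i\}_{i\in\mathcal{I}_{\mathrm{cal}}}$ and the level $\alpha$; it does not depend on the test-time features or on the protected attribute supplied to the set-valued procedure. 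This means the only quantity that could possibly distinguish the two sets is the symmetrized score evaluated at the two counterfactual arguments.

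Next I would fix an arbitrary label $y\in\mathcal{Y}$ and apply Lemma~\ref{lem:score-invariance} to obtain $s_{\mathrm{cf}}(X_{A\leftarrow a},a,y)=s_{\mathrm{cf}}(X_{A\leftarrow a'},a',y)$. Since the two scores are equal and the cutoff is common to both, the inequality $s_{\mathrm{cf}}(X_{A\leftarrow a},a,y)\le\hat q_{1-\alpha}$ holds if and only if $s_{\mathrm{cf}}(X_{A\leftarrow a'},a',y)\le\hat q_{1-\alpha}$. Hence $y\in\mathcal{C}_\alpha^{\mathrm{CF}}(X_{A\leftarrow a},a)$ if and only if $y\in\mathcal{C}_\alpha^{\mathrm{CF}}(X_{A\leftarrow a'},a')$. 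Because $y$ was arbitrary, the two sets contain exactly the same labels and therefore coincide, which is precisely \eqref{eq:set-invariance}.

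The argument is essentially immediate once Lemma~\ref{lem:score-invariance} is in hand, so there is no substantive obstacle; the corollary is a direct consequence of thresholding an invariant score at a fixed cutoff. The one point worth stating explicitly---and the only place where a careless argument could slip---is the observation that $\hat q_{1-\alpha}$ is held fixed across the comparison. Were the threshold allowed to depend on the test features or on $a$, the pointwise score identity alone would not force set equality; it is the combination of an $a$-invariant symmetrized score and an $a$-independent threshold that yields the conclusion.
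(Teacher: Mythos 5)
Your proof is correct and follows exactly the route the paper intends: the paper simply declares the corollary ``immediate from Lemma~\ref{lem:score-invariance} and the definition of CF-CP sets in~\eqref{eq:cfcp-set},'' and your write-up is that same argument spelled out, with the (worthwhile) explicit observation that the calibration threshold $\hat q_{1-\alpha}$ is common to both sets and independent of the intervention.
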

The proof is immediate from Lemma~\ref{lem:score-invariance} and the definition of CF-CP sets in~\eqref{eq:cfcp-set}.

The previous lemma and corollary establish that CF-CP enforces \emph{set-level counterfactual invariance}.
We now argue that replacing $s$ with $s_{\mathrm{cf}}$ does \emph{not} disturb the exchangeability required by split CP, so the standard marginal coverage guarantee continues to hold.

\begin{theorem}[Exchangeability preservation]
\label{thm:exch}
Assume that Assumption~\ref{assump:invertible} holds and that the calibration points $\{(X_i,A_i,Y_i)\}_{i\in\mathcal{I}_{\mathrm{cal}}}$ and the test point $(X_{n+1},A_{n+1},Y_{n+1})$ are exchangeable.
Suppose the score function $s$ and the aggregator $\mathrm{Agg}$ are measurable, and $\mathrm{Agg}$ is permutation-invariant.
Define the counterfactual symmetrization
\begin{equation*}
    s_{\mathrm{cf}}(x,a,y) = \mathrm{Agg}\left\{ s\left(x_{A\leftarrow a'},a',y\right) : a'\in\mathcal{A}\right\}.
\end{equation*}
Then, the scores
\begin{equation}
    \{S^{\mathrm{cf}}_i = s_{\mathrm{cf}}(X_i,A_i,Y_i) : i \in \mathcal{I}_{\mathrm{cal}} \cup \{n+1\}\}
\end{equation}
are exchangeable.
\end{theorem}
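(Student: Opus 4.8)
The plan is to reduce the statement to the elementary fact that exchangeability is preserved when a single fixed measurable map is applied coordinatewise. Writing $Z_i = (X_i, A_i, Y_i)$ for $i \in \mathcal{I}_{\mathrm{cal}} \cup \{n+1\}$, the key observation is that $S^{\mathrm{cf}}_i = h(Z_i)$ for the common function $h := s_{\mathrm{cf}} : \mathcal{X} \times \mathcal{A} \times \mathcal{Y} \to \mathbb{R}$. The entire content of the theorem is then that $h$ is measurable and is applied identically to each index, depending on $Z_i$ alone and not coupling distinct data points; given these, the conclusion follows from the standard transformation lemma for exchangeable sequences.

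First I would pin down that $s_{\mathrm{cf}}$ is a single, index-free, deterministic, and measurable function. Throughout I would condition on the pre-trained predictor $\hat f$ and on the structural maps $g$ of the SCM, so that $s$ and the counterfactual operators are fixed and independent of the calibration/test sample; this is the usual split-CP convention and is exactly what prevents $h$ from depending on the data. Under Assumption~\ref{assump:invertible}, the counterfactual feature map for a single point is $T_{a'}(x,a) := g_{a'}(g_a^{-1}(x))$, which involves only that point's own $(x,a)$, so $s_{\mathrm{cf}}(x,a,y) = \mathrm{Agg}\{ s(T_{a'}(x,a), a', y) : a' \in \mathcal{A}\}$ is manifestly a function of $(x,a,y)$ alone. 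For measurability, each component map $(x,a,y) \mapsto s(T_{a'}(x,a), a', y)$ is measurable as a composition of the measurable score $s$ with the structural maps—here invoking that $g_a^{-1}$ is measurable, which follows from the invertibility in Assumption~\ref{assump:invertible} together with the usual regularity of the structural equations. Since $\mathcal{A}$ is finite, the $|\mathcal{A}|$-tuple of these components is measurable, and composing with the measurable $\mathrm{Agg}$ yields measurability of $h$. Permutation-invariance of $\mathrm{Agg}$ is what makes the aggregation over the unordered intervention family well-defined (independent of how we enumerate $\mathcal{A}$), but it plays no further role in the exchangeability argument itself.

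Next I would invoke the transformation lemma. Let $m = |\mathcal{I}_{\mathrm{cal}}| + 1$ and let $h^{\otimes m}(z_1, \dots, z_m) := (h(z_1), \dots, h(z_m))$, which is measurable. For any permutation $\pi$ of the $m$ indices, exchangeability of $(Z_i)$ gives $(Z_{\pi(1)}, \dots, Z_{\pi(m)}) \stackrel{d}{=} (Z_1, \dots, Z_m)$; pushing both sides through $h^{\otimes m}$ and using that a measurable map preserves equality in distribution of the joint vector yields $(S^{\mathrm{cf}}_{\pi(1)}, \dots, S^{\mathrm{cf}}_{\pi(m)}) \stackrel{d}{=} (S^{\mathrm{cf}}_1, \dots, S^{\mathrm{cf}}_m)$, which is precisely exchangeability of the symmetrized scores.

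The argument is short because the only genuine subtlety is bookkeeping rather than probability: I expect the main point requiring care to be the claim that $h$ carries no index or cross-sample dependence. This is where one must be explicit that the structural maps $g$ (hence the operators $T_{a'}$) and the predictor $\hat f$ are \emph{fixed} rather than estimated from the exchangeable sample—were $g$ fit on the calibration data, the transformation applied to point $i$ would depend on the other points and exchangeability could fail. Conditioning on $\hat f$ and $g$ removes this coupling, after which the standard split-CP coverage guarantee applies verbatim to the scores $S^{\mathrm{cf}}_i$.
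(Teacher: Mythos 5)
Your proposal is correct and follows essentially the same route as the paper's proof: both reduce the claim to the fact that applying a single fixed, measurable, coordinate-wise transformation (your $h$, the paper's $T(x,a,y) = \mathrm{Agg}\{s(g_{a'}(g_a^{-1}(x)),a',y) : a'\in\mathcal{A}\}$) to an exchangeable sequence preserves exchangeability. The only difference is cosmetic: the paper cites Proposition~4 of \citet{kuchibhotla2020exchangeability} for this preservation fact, whereas you prove it inline via the push-forward argument $(Z_{\pi(1)},\dots,Z_{\pi(m)}) \stackrel{d}{=} (Z_1,\dots,Z_m) \Rightarrow h^{\otimes m}(Z_{\pi(1)},\dots,Z_{\pi(m)}) \stackrel{d}{=} h^{\otimes m}(Z_1,\dots,Z_m)$, and you make explicit the (correct) caveat that $\hat f$ and the structural maps $g$ must be fixed rather than fit on the calibration data.
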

The proof is given in the Appendix~\ref{sec:proof-thm-exch}.
The main idea is that $s_{\mathrm{cf}}$ is a coordinate-wise measurable transformation of the exchangeable tuples $(X_i,A_i,Y_i)$, hence the resulting scores remain exchangeable.

Theorem~\ref{thm:exch} shows that the exchangeability structure required by split CP is preserved when using the symmetrized score $s_{\mathrm{cf}}$.
This leads to the following coverage guarantee for CF-CP.

\begin{corollary}[Marginal coverage for CF-CP]
\label{cor:coverage}
Under the conditions of Theorem~\ref{thm:exch}, the split conformal set constructed with $s_{\mathrm{cf}}$ satisfies the standard marginal coverage guarantee:
\begin{equation}
    \Pr\big\{Y_{n+1} \in \mathcal{C}_\alpha^{\mathrm{CF}}(X_{n+1},A_{n+1})\big\} \ge 1-\alpha.
\end{equation}
\end{corollary}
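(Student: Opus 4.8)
The plan is to observe that Corollary~\ref{cor:coverage} follows immediately from Theorem~\ref{thm:exch} together with the standard split-CP coverage argument reviewed in Section~\ref{subsec:split-cp}. The essential point is that CF-CP is nothing more than ordinary split CP run with the particular score function $s_{\mathrm{cf}}$ in place of a generic score $s$: the set construction in~\eqref{eq:cfcp-set} has exactly the same thresholding form as the standard conformal set, and the calibration quantile $\hat q_{1-\alpha}$ in Algorithm~\ref{alg:cfcp} is computed identically. Consequently, the only probabilistic input that the classical coverage proof requires---namely, exchangeability of the $n+1$ conformity scores---is supplied verbatim by Theorem~\ref{thm:exch}.

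First I would reduce set membership to a scalar inequality. Plugging the true test label $y = Y_{n+1}$ into the set definition~\eqref{eq:cfcp-set} yields the event identity
\begin{equation*}
\left\{ Y_{n+1} \in \mathcal{C}_\alpha^{\mathrm{CF}}(X_{n+1},A_{n+1}) \right\} = \left\{ S^{\mathrm{cf}}_{n+1} \le \hat q_{1-\alpha} \right\},
\end{equation*}
where $S^{\mathrm{cf}}_{n+1} = s_{\mathrm{cf}}(X_{n+1},A_{n+1},Y_{n+1})$ is the symmetrized score evaluated at the test label. Thus computing the coverage probability amounts to bounding the probability that the test score lies at or below the empirical quantile of the calibration scores.

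Next I would invoke the standard quantile lemma. By Theorem~\ref{thm:exch}, the scores $\{S^{\mathrm{cf}}_i : i \in \mathcal{I}_{\mathrm{cal}} \cup \{n+1\}\}$ are exchangeable, so the rank of $S^{\mathrm{cf}}_{n+1}$ within the full collection is (sub-)uniform. Taking $\hat q_{1-\alpha}$ to be the finite-sample-corrected empirical quantile of the calibration scores used in Algorithm~\ref{alg:cfcp}, the exchangeability argument of \citet{lei2018distribution} gives
\begin{equation*}
\Pr\left\{ S^{\mathrm{cf}}_{n+1} \le \hat q_{1-\alpha} \right\} \ge 1-\alpha,
\end{equation*}
with the randomness taken over both the calibration set and the test point. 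Combining this with the event identity above yields the claimed guarantee; if the symmetrized scores are almost surely distinct, the same argument also furnishes the matching upper bound $1-\alpha + 1/(n+1)$.

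I expect the only real subtlety---rather than a genuine obstacle---to be bookkeeping: verifying that $s_{\mathrm{cf}}$ is a bona fide measurable conformity score (guaranteed by the measurability hypotheses on $s$ and $\mathrm{Agg}$ in Theorem~\ref{thm:exch}), so that the classical machinery applies without modification, and confirming that no property of the scores beyond exchangeability is used anywhere in the coverage proof. Once these are in place, the result follows by direct appeal to the standard split-CP theorem with $S^{\mathrm{cf}}$ in the role of the generic score.
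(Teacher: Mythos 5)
Your proposal is correct and follows exactly the same route as the paper, which simply notes that the result "directly follows the standard split CP coverage proof \citep{vovk2005algorithmic,lei2018distribution} applied to the exchangeable scores $S^{\mathrm{cf}}_i$." Your write-up just makes explicit the two steps the paper leaves implicit---the event identity reducing coverage to a quantile comparison, and the quantile lemma applied to the exchangeable symmetrized scores from Theorem~\ref{thm:exch}.
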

The proof directly follows the standard split CP coverage proof \citep{vovk2005algorithmic,lei2018distribution} applied to the exchangeable scores $S^{\mathrm{cf}}_i$.

Combining Corollaries~\ref{cor:set-invariance} and \ref{cor:coverage}, we conclude that CF-CP achieves both marginal coverage and set-level counterfactual invariance, as desired.

\section{EXPERIMENTS}
\label{sec:experiments}

In this section, we empirically evaluate CF-CP on synthetic and real datasets, spanning regression and classification tasks and show that it achieves the desired counterfactual fairness while meeting the target coverage rate with minimal increase in the size of the prediction sets is available in the following repository.\footnote{\url{https://github.com/guldoganozgur/cf_cp}}

\paragraph{Methods.}
We compare the following methods: (i) \textbf{Split CP}: standard split conformal prediction using the original conformity score with the base predictor; (ii) \textbf{Post-hoc Union}: computes CP sets under each intervention and takes their union, specifically, $\mathcal{C}^{\cup}_\alpha(x):=\bigcup_{a'\in\mathcal{A}} \mathcal{C}_\alpha(x_{A\leftarrow a'}, a')$; (iii) \textbf{Counterfactual Fairness with $U$}~\citep{kusner2017counterfactual}: train a counterfactually fair predictor by excluding descendants of $A$ in the SCM, to simplify this we train base model only on $U$ similar to the approach in \citet{zhou2024counterfactual} followed by split CP; (iv) \textbf{Counterfactual Fair Representations (CFR)}~\citep{zuo2023counterfactually}: trains a predictor on the augmented features as the mean of factual and counterfactual features, specifically, $\tilde{X} = \frac{1}{|\mathcal{A}|}\sum_{a'\in\mathcal{A}} X_{A\leftarrow a'}$ and then applies split CP; (v) \textbf{Plug-in Counterfactual Fairness (PCF)}~\citep{zhou2024counterfactual}: uses a weighted combination of the predictions for each intervention, specifically, $\hat{f}(X,A) = \sum_{a'\in\mathcal{A}} \Pr(A=a') \hat{f}(X_{A\leftarrow a'},a')$ where $\hat{f}$ is the base predictor and then follows with split CP; and lastly our method (vi) \textbf{CF-CP}: our proposed method that uses the counterfactual symmetrized score in split CP, and we consider three aggregation functions: \texttt{mean}, \texttt{max}, and \texttt{min}.
Following common practice, in the classification tasks if the prediction set is empty, we include the label with the highest predicted probability.
The base predictor $\hat{f}$ is a linear model for both regression and classification tasks.
More details on hyperparameters and training are in the Appendix~\ref{sec:implementation-details}.

\paragraph{Score functions.}
For regression tasks, we use the absolute residual score $s(x,a,y) = |\hat{f}(x,a) - y|$.
So, the less the absolute error, the better the conformity.
For classification tasks, we use Least Ambiguous Classifier (LAC)~\citep{sadinle2019least} score $s(x,a,y) = 1 - \hat{f}_y(x,a)$, where $\hat{f}_y(x,a)$ is the predicted probability of class $y$.
The more probable the class, the better the conformity.
We also provide additional results with different base score functions in Appendix~\ref{sec:additional-results}. 

\begin{table*}[ht]
\centering
\caption{Synthetic results at $\alpha=0.1$ (mean $\pm$ std over 10 runs). Left block: regression; right block: classification. ($\downarrow$) and ($\uparrow$) indicates that a smaller and larger value is desirable, respectively. Highlighting follows the convention described in the evaluation metrics paragraph.}
\label{tab:synth-combined}
\setlength{\tabcolsep}{4.5pt}
\renewcommand{\arraystretch}{1.05}
\resizebox{\textwidth}{!}{
\begin{tabular}{lccccc|ccccc}
\toprule
& \multicolumn{5}{c}{Regression} & \multicolumn{5}{c}{Classification} \\
\cmidrule(lr){2-6}\cmidrule(lr){7-11}
Method & MSE ($\downarrow$) & TE ($\downarrow$) & Coverage & Avg. Set Size ($\downarrow$) & CSD ($\downarrow$) & Accuracy ($\uparrow$) & TE ($\downarrow$) & Coverage & Avg. Set Size ($\downarrow$) & CSD ($\downarrow$) \\
\midrule
SplitCP & 0.377 $\pm$ 0.012 & 1.219 $\pm$ 0.014 & 0.901 $\pm$ 0.008 & 2.032 $\pm$ 0.038 & 0.713 $\pm$ 0.009 & 0.730 $\pm$ 0.011 & 0.543 $\pm$ 0.037 & 0.905 $\pm$ 0.011 & 2.025 $\pm$ 0.209 & 0.642 $\pm$ 0.043 \\
Post-hoc Union & 0.377 $\pm$ 0.012 & 1.219 $\pm$ 0.014 & 0.944 $\pm$ 0.005 & 3.251 $\pm$ 0.043 & 0.000 $\pm$ 0.000 & 0.730 $\pm$ 0.011 & 0.543 $\pm$ 0.037 & 0.936 $\pm$ 0.010 & 3.123 $\pm$ 0.255 & 0.000 $\pm$ 0.000 \\
CFU & 1.112 $\pm$ 0.017 & 0.000 $\pm$ 0.000 & 0.901 $\pm$ 0.008 & 3.447 $\pm$ 0.088 & 0.000 $\pm$ 0.000 & \dunderline{0.589 $\pm$ 0.017} & 0.000 $\pm$ 0.000 & 0.907 $\pm$ 0.009 & 2.700 $\pm$ 0.125 & 0.000 $\pm$ 0.000 \\
CFR & 0.832 $\pm$ 0.014 & 0.000 $\pm$ 0.000 & 0.899 $\pm$ 0.010 & 2.964 $\pm$ 0.055 & 0.000 $\pm$ 0.000 & \dunderline{0.589 $\pm$ 0.017} & 0.000 $\pm$ 0.000 & 0.908 $\pm$ 0.009 & 2.700 $\pm$ 0.125 & 0.000 $\pm$ 0.000 \\
PCF & \dunderline{0.826 $\pm$ 0.013} & 0.000 $\pm$ 0.000 & 0.898 $\pm$ 0.008 & \dunderline{2.951 $\pm$ 0.046} & 0.000 $\pm$ 0.000 & 0.571 $\pm$ 0.017 & 0.000 $\pm$ 0.000 & 0.905 $\pm$ 0.012 & \bftab{2.526 $\boldsymbol\pm$ 0.166} & 0.000 $\pm$ 0.000 \\
\midrule
CF-CP-mean & \bftab{0.377 $\boldsymbol\pm$ 0.012} & 1.219 $\pm$ 0.014 & 0.899 $\pm$ 0.010 & 2.971 $\pm$ 0.050 & 0.000 $\pm$ 0.000 & \bftab{0.730 $\boldsymbol\pm$ 0.011} & 0.543 $\pm$ 0.037 & 0.904 $\pm$ 0.014 & \dunderline{2.542 $\pm$ 0.185} & 0.000 $\pm$ 0.000 \\
CF-CP-max & \bftab{0.377 $\boldsymbol\pm$ 0.012} & 1.219 $\pm$ 0.014 & 0.900 $\pm$ 0.009 & 3.275 $\pm$ 0.053 & 0.000 $\pm$ 0.000 & \bftab{0.730 $\boldsymbol\pm$ 0.011} & 0.543 $\pm$ 0.037 & 0.935 $\pm$ 0.006 & 3.696 $\pm$ 0.168 & 0.038 $\pm$ 0.008 \\
CF-CP-min & \bftab{0.377 $\boldsymbol\pm$ 0.012} & 1.219 $\pm$ 0.014 & 0.900 $\pm$ 0.010 & \bftab{2.897 $\boldsymbol\pm$ 0.058} & 0.000 $\pm$ 0.000 & \bftab{0.730 $\boldsymbol\pm$ 0.011} & 0.543 $\pm$ 0.037 & 0.906 $\pm$ 0.013 & 2.581 $\pm$ 0.180 & 0.000 $\pm$ 0.000 \\
\bottomrule
\end{tabular}

}
\end{table*}

\paragraph{Evaluation metrics.}
For each dataset and method, we report the following metrics---$\uparrow$ and $\downarrow$ indicate that a larger and smaller value is desirable, respectively: 
(i) \textbf{MSE ($\downarrow$)/Accuracy ($\uparrow$)}: the mean squared error for regression and accuracy for classification on the test set;
(ii) \textbf{Total Effect(TE) ($\downarrow$)}: measures the counterfactual fairness of the base predictor $\hat{f}$, for each test point, we compute the change in the model's output when $A$ is counterfactually flipped, and then average over the test set;
(iii) \textbf{Coverage}: the empirical coverage of the prediction sets on the test set, $\frac{1}{|\mathcal{I}_{\mathrm{test}}|} \sum_{i\in\mathcal{I}_{\mathrm{test}}} \mathbb{I}[Y_i \in \mathcal{C}_\alpha(X_i,A_i)]$; 
(iv) \textbf{Avg. Set Size ($\downarrow$)}: the average size of the prediction sets on the test set, for regression it is the average length of the intervals, and for classification it is the average cardinality of the sets, specifically, $\frac{1}{|\mathcal{I}_{\mathrm{test}}|} \sum_{i\in\mathcal{I}_{\mathrm{test}}} |\mathcal{C}_\alpha(X_i,A_i)|$; and 
(v) \textbf{Counterfactual Set Disparity (CSD) ($\downarrow$)}: quantifies how much the set changes across protected-attribute interventions for each test point.
The details of CSD are as follows:
For binary $A$, we use the Jaccard distance between the factual and counterfactual sets:
\begin{equation}
\label{eq:csd}
    \mathrm{CSD}(x) = 1 - \frac{|\mathcal{C}_\alpha(x_{A\leftarrow a},a) \cap \mathcal{C}_\alpha(x_{A\leftarrow a'},a')|}{|\mathcal{C}_\alpha(x_{A\leftarrow a},a) \cup \mathcal{C}_\alpha(x_{A\leftarrow a'},a')|}.
\end{equation}
For multi-valued $A$, it can be extended by averaging pairwise Jaccard distances across interventions relative to the factual value.
A lower CSD indicates more similar sets across counterfactuals, with 0 indicating identical sets and 1 indicating non-overlapping sets.
We report the average counterfactual set disparity on the test set, specifically, $\frac{1}{|\mathcal{I}_{\mathrm{test}}|} \sum_{i\in\mathcal{I}_{\mathrm{test}}} \mathrm{CSD}(X_i)$.
The metrics for each method are computed over 10 random splits of the data into training, calibration, and test sets, and we report the mean and standard deviation in the tables and figures.
For all tables in this paper, we use a common highlighting convention.
For the tables that use oracle counterfactuals, bold (underlined) values mark the best (second-best) MSE/Accuracy and Avg.\ Set Size among the methods that achieve the target coverage and perfect counterfactual fairness for prediction sets (CSD=0).
For the tables with noisy counterfactuals, bold (underlined) values mark the best (second-best) CSD, since in this setting we are primarily interested in how fairness behaves under imperfect counterfactuals.

\begin{table*}[ht]
\centering
\caption{Real data results. Left block: regression on Law School dataset with $\alpha=0.1$; right block: classification on Bios dataset with $\alpha=0.025$ (mean $\pm$ std over 10 runs). Highlighting follows the convention described in the evaluation metrics paragraph.}
\label{tab:real-combined}
\setlength{\tabcolsep}{4.5pt}
\renewcommand{\arraystretch}{1.05}
\resizebox{\textwidth}{!}{
\begin{tabular}{lccccc|ccccc}
\toprule
& \multicolumn{5}{c}{Law School Regression} & \multicolumn{5}{c}{Bios Classification} \\
\cmidrule(lr){2-6}\cmidrule(lr){7-11}
Method &  MSE ($\downarrow$) & TE ($\downarrow$) & Coverage & Avg. Set Size ($\downarrow$) & CSD ($\downarrow$) & Accuracy ($\uparrow$) & TE ($\downarrow$) & Coverage & Avg. Set Size ($\downarrow$) & CSD ($\downarrow$) \\
\midrule
SplitCP & 0.758 $\pm$ 0.005 & 0.723 $\pm$ 0.019 & 0.898 $\pm$ 0.009 & 2.847 $\pm$ 0.070 & 0.405 $\pm$ 0.010 & 0.801 $\pm$ 0.001 & 0.094 $\pm$ 0.003 & 0.975 $\pm$ 0.003 & 3.311 $\pm$ 0.199 & 0.199 $\pm$ 0.005 \\
Post-hoc Union & 0.758 $\pm$ 0.005 & 0.723 $\pm$ 0.019 & 0.944 $\pm$ 0.007 & 3.570 $\pm$ 0.078 & 0.000 $\pm$ 0.000 & 0.801 $\pm$ 0.001 & 0.094 $\pm$ 0.002 & 0.978 $\pm$ 0.002 & 3.884 $\pm$ 0.227 & 0.000 $\pm$ 0.000 \\
CFU & 0.829 $\pm$ 0.007 & 0.000 $\pm$ 0.000 & 0.898 $\pm$ 0.011 & 2.968 $\pm$ 0.078 & 0.000 $\pm$ 0.000 & - & - & - & - & - \\
CFR & \dunderline{0.827 $\pm$ 0.007} & 0.000 $\pm$ 0.000 & 0.897 $\pm$ 0.012 & \dunderline{2.967 $\pm$ 0.084} & 0.000 $\pm$ 0.000 & \dunderline{0.799 $\pm$ 0.000} & 0.000 $\pm$ 0.000 & 0.974 $\pm$ 0.003 & \bftab{3.379 $\boldsymbol\pm$ 0.183} & 0.000 $\pm$ 0.000 \\
PCF & 0.828 $\pm$ 0.007 & 0.000 $\pm$ 0.000 & 0.897 $\pm$ 0.011 & \bftab{2.958 $\boldsymbol\pm$ 0.072} & 0.000 $\pm$ 0.000 & 0.795 $\pm$ 0.001 & 0.000 $\pm$ 0.000 & 0.974 $\pm$ 0.003 & \dunderline{3.471 $\pm$ 0.208} & 0.000 $\pm$ 0.000 \\
\midrule
CF-CP-mean & \bftab{0.758 $\boldsymbol\pm$ 0.005} & 0.723 $\pm$ 0.019 & 0.900 $\pm$ 0.013 & 3.106 $\pm$ 0.100 & 0.000 $\pm$ 0.000 & \bftab{0.801 $\boldsymbol\pm$ 0.001} & 0.093 $\pm$ 0.003 & 0.974 $\pm$ 0.003 & 3.486 $\pm$ 0.191 & 0.000 $\pm$ 0.000 \\
CF-CP-max & \bftab{0.758 $\boldsymbol\pm$ 0.005} & 0.723 $\pm$ 0.019 & 0.900 $\pm$ 0.013 & 3.106 $\pm$ 0.100 & 0.000 $\pm$ 0.000 & \bftab{0.801 $\boldsymbol\pm$ 0.001} & 0.094 $\pm$ 0.003 & 0.975 $\pm$ 0.003 & 3.700 $\pm$ 0.208 & 0.000 $\pm$ 0.000 \\
CF-CP-min & \bftab{0.758 $\boldsymbol\pm$ 0.005} & 0.723 $\pm$ 0.019 & 0.900 $\pm$ 0.013 & 3.106 $\pm$ 0.100 & 0.000 $\pm$ 0.000 & \bftab{0.801 $\boldsymbol\pm$ 0.001} & 0.093 $\pm$ 0.003 & 0.974 $\pm$ 0.003 & 3.508 $\pm$ 0.199 & 0.000 $\pm$ 0.000 \\
\bottomrule
\end{tabular}

}
\end{table*}

\paragraph{Datasets.}
We evaluate on synthetic and real datasets spanning regression and multiclass classification. 
Throughout our experiments, we assume access to an underlying structural causal model that generates the data; for the synthetic datasets, the SCM is specified by construction, and for the real datasets, the SCMs are obtained by fitting structural equations to the observed data; full details on data generation, preprocessing, SCM specification and fitting, and counterfactual construction are in Appendix~\ref{sec:datasets}.

\textbf{Synthetic Regression.}
A nonlinear SCM with binary protected attribute $A$ generates $(X,Y)$; the task is to predict $Y$ from $(X,A)$ adopted from \citet{zuo2023counterfactually}.
Counterfactuals are formed by flipping $A$ in the $X$-equation while holding exogenous noise fixed. 

\textbf{Synthetic Classification.}
A multi-class ($K{=}10$) SCM extends prior binary settings; the task is to predict $Y$ from $(X,A)$. 
It is inspired by \citet{zhou2024counterfactual} but extended to multi-class targets.
Counterfactuals replace $A$ only in the structural equation for $X$ (latent noise held fixed).

\textbf{Law School.}
Predict first-year average (FYA) from LSAT, UGPA, race, and gender~\citep{wightman1998lsac}; we take race as $A$ (binary: white vs non-white).
Counterfactuals are obtained by fitting a LiNGAM-based SCM to the observed data and intervening on $A$.

\textbf{Bias in Bios.}
Predict profession (28 classes) from biography text; $A$ is gender (binary)~\citep{de2019bias}.
Counterfactuals for this dataset are constructed by the method proposed in \citet{lemberger2024explaining}, which involves first obtaining text embeddings via a pre-trained BERT model~\citep{devlin2019bert}, then decomposing the embedding into components aligned with and independent of $A$; counterfactuals flip the $A$ component while keeping the independent component fixed.

Except for Bios dataset, we use $\alpha=0.1$; for Bios we use $\alpha=0.025$ since $\alpha=0.1$ leads to near-singletons due to high accuracy of the base predictor.

\begin{table*}[ht]
\centering
\caption{Results with noisy estimations of counterfactuals (mean $\pm$ std over 10 runs). Highlighting follows the convention described in the evaluation metrics paragraph.}
\label{tab:noisy-combined}
\setlength{\tabcolsep}{4.5pt}
\renewcommand{\arraystretch}{1.05}
\resizebox{\textwidth}{!}{
\begin{tabular}{lcc|cc|cc|cc}
\toprule
Method & \multicolumn{2}{c}{Synth. Regression} & \multicolumn{2}{c}{Synth. Classification} & \multicolumn{2}{c}{Law School Regression} & \multicolumn{2}{c}{Bios Classification} \\
 & Avg. Set Size ($\downarrow$) & CSD ($\downarrow$) & Avg. Set Size ($\downarrow$) & CSD ($\downarrow$) & Avg. Set Size ($\downarrow$) & CSD ($\downarrow$) & Avg. Set Size ($\downarrow$) & CSD ($\downarrow$) \\
\midrule
SplitCP & 2.032 $\pm$ 0.038 & 0.713 $\pm$ 0.009 & 2.025 $\pm$ 0.209 & 0.642 $\pm$ 0.043 & 2.847 $\pm$ 0.070 & 0.405 $\pm$ 0.010 & 3.311 $\pm$ 0.199 & 0.199 $\pm$ 0.005 \\
Post-hoc Union & 3.259 $\pm$ 0.044 & 0.152 $\pm$ 0.001 & 3.115 $\pm$ 0.254 & 0.260 $\pm$ 0.006 & 3.571 $\pm$ 0.078 & \dunderline{0.036 $\pm$ 0.001} & 3.991 $\pm$ 0.231 & 0.130 $\pm$ 0.002 \\
CFU & 3.551 $\pm$ 0.088 & 0.135 $\pm$ 0.004 & 3.462 $\pm$ 0.239 & 0.345 $\pm$ 0.008 & 2.998 $\pm$ 0.063 & 0.055 $\pm$ 0.002 & - & - \\
CFR & 3.013 $\pm$ 0.078 & \bftab{0.115 $\boldsymbol\pm$ 0.003} & 2.913 $\pm$ 0.209 & \bftab{0.209 $\boldsymbol\pm$ 0.004} & 2.984 $\pm$ 0.064 & \bftab{0.029 $\boldsymbol\pm$ 0.001} & 3.450 $\pm$ 0.226 & \bftab{0.099 $\boldsymbol\pm$ 0.003} \\
PCF & 2.983 $\pm$ 0.061 & \bftab{0.115 $\boldsymbol\pm$ 0.003} & 2.600 $\pm$ 0.210 & 0.263 $\pm$ 0.006 & 2.961 $\pm$ 0.071 & 0.037 $\pm$ 0.001 & 3.560 $\pm$ 0.197 & \dunderline{0.109 $\pm$ 0.002} \\
\midrule
CF-CP-mean & 3.016 $\pm$ 0.054 & \dunderline{0.118 $\pm$ 0.002} & 2.609 $\pm$ 0.203 & 0.259 $\pm$ 0.006 & 3.108 $\pm$ 0.108 & \bftab{0.029 $\boldsymbol\pm$ 0.001} & 3.581 $\pm$ 0.204 & \dunderline{0.109 $\pm$ 0.002} \\
CF-CP-max & 3.374 $\pm$ 0.057 & 0.145 $\pm$ 0.003 & 4.271 $\pm$ 0.185 & \dunderline{0.230 $\pm$ 0.004} & 3.114 $\pm$ 0.112 & 0.041 $\pm$ 0.001 & 3.813 $\pm$ 0.210 & 0.123 $\pm$ 0.002 \\
CF-CP-min & 2.935 $\pm$ 0.050 & 0.168 $\pm$ 0.003 & 2.657 $\pm$ 0.205 & 0.271 $\pm$ 0.005 & 3.107 $\pm$ 0.093 & 0.041 $\pm$ 0.001 & 3.599 $\pm$ 0.165 & 0.128 $\pm$ 0.003 \\
\bottomrule
\end{tabular}
}
\end{table*}

\begin{figure*}[ht]
    \centering
    \begin{subfigure}{0.25\textwidth}
        \centering
        \includegraphics[width=\textwidth]{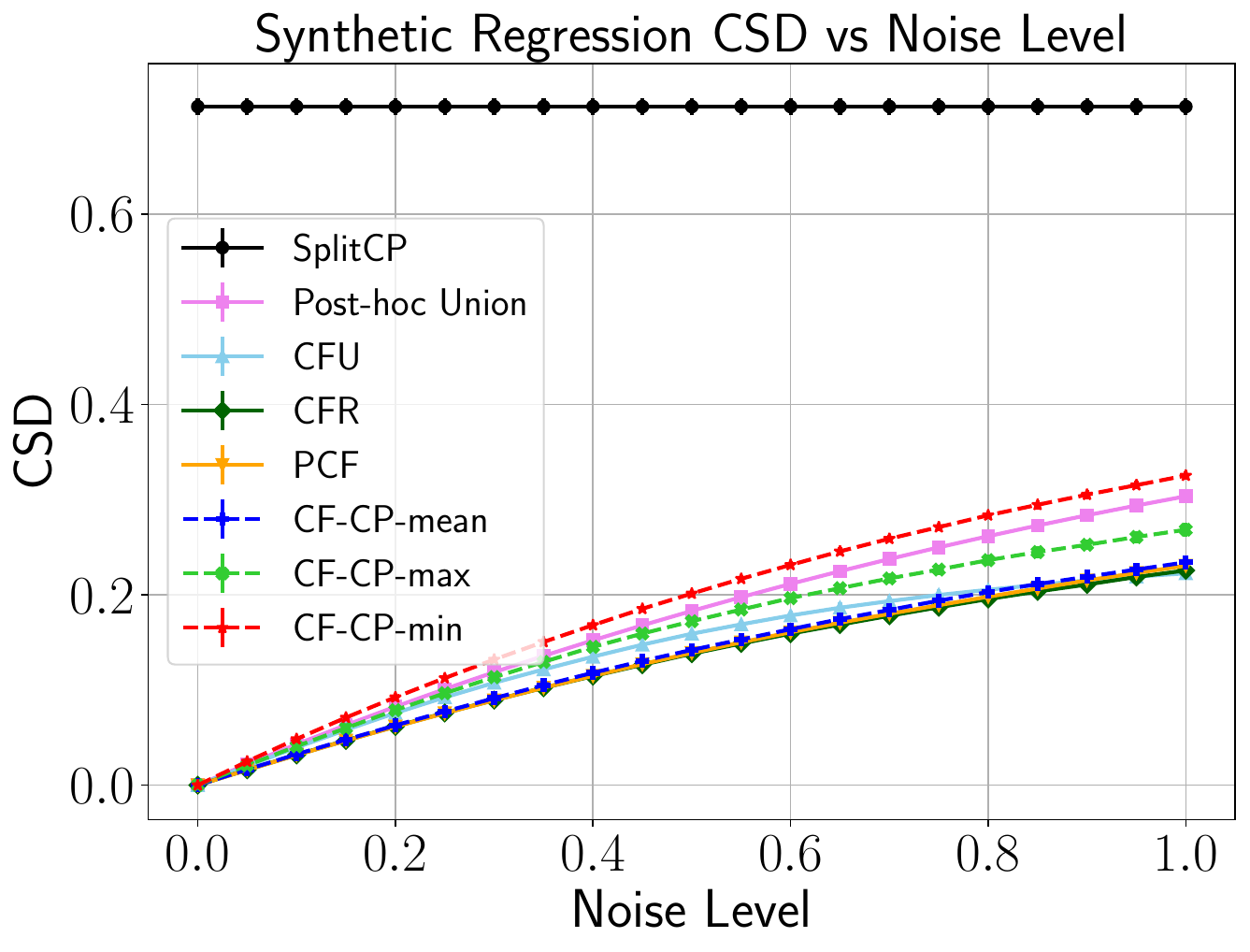}
        \caption{Synthetic Regression}
        \label{fig:synth-reg-csd-noise}
    \end{subfigure}%
    \begin{subfigure}{0.25\textwidth}
        \centering
        \includegraphics[width=\textwidth]{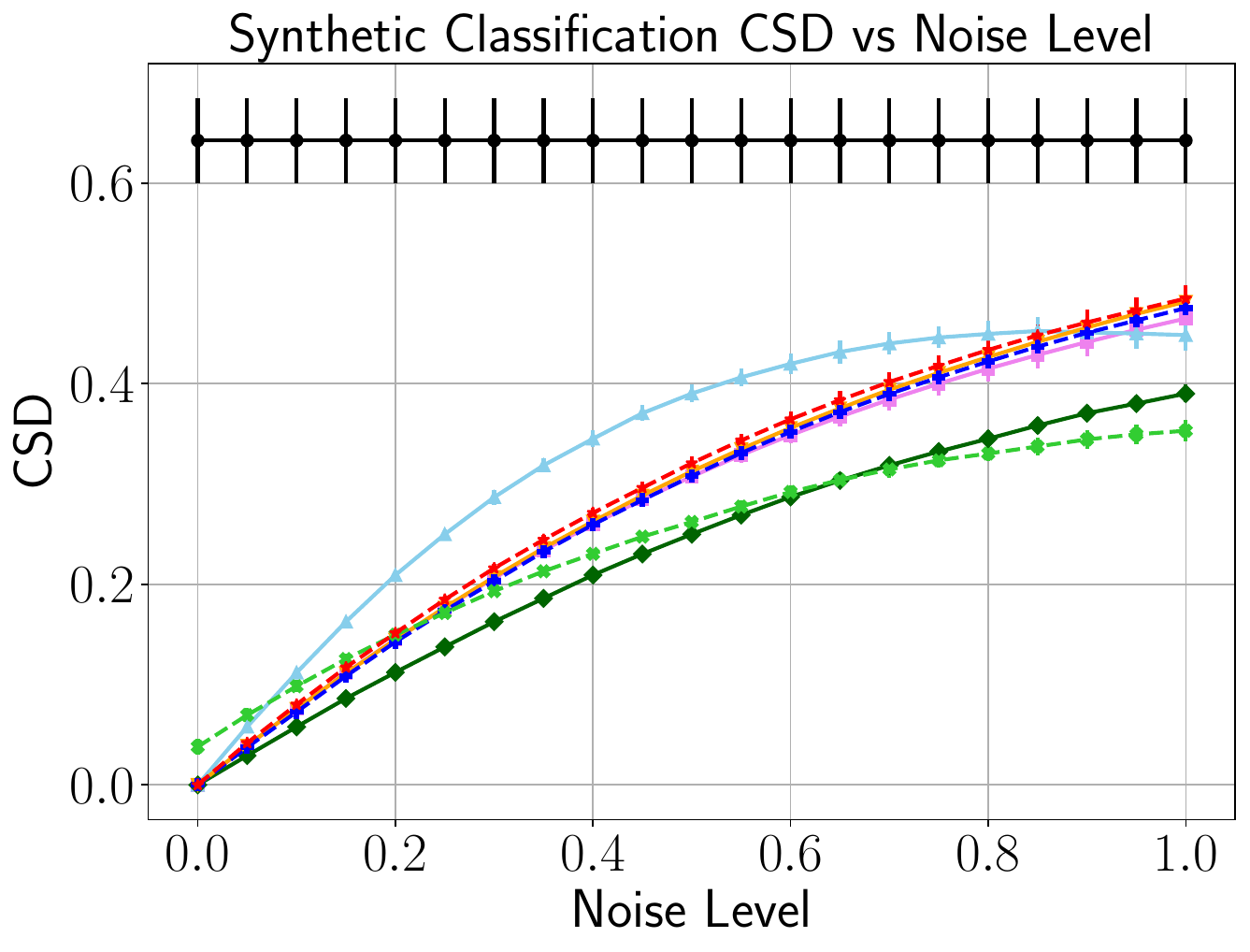}
        \caption{Synthetic Classification}
        \label{fig:synth-clf-csd-noise}
    \end{subfigure}%
    \begin{subfigure}{0.25\textwidth}
        \centering
        \includegraphics[width=\textwidth]{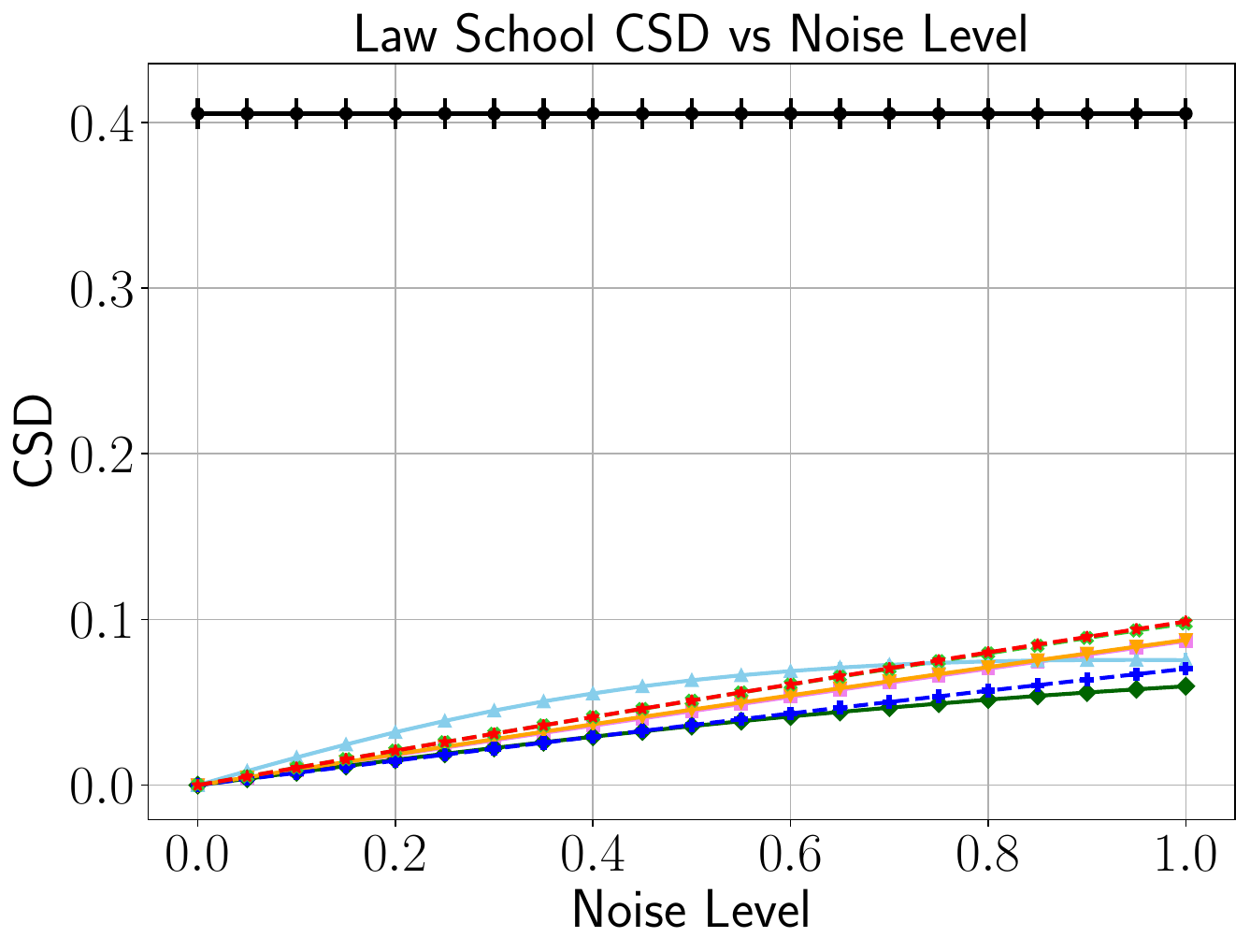}
        \caption{Law School Regression}
        \label{fig:law-reg-csd-noise}
    \end{subfigure}%
    \begin{subfigure}{0.25\textwidth}
        \centering
        \includegraphics[width=\textwidth]{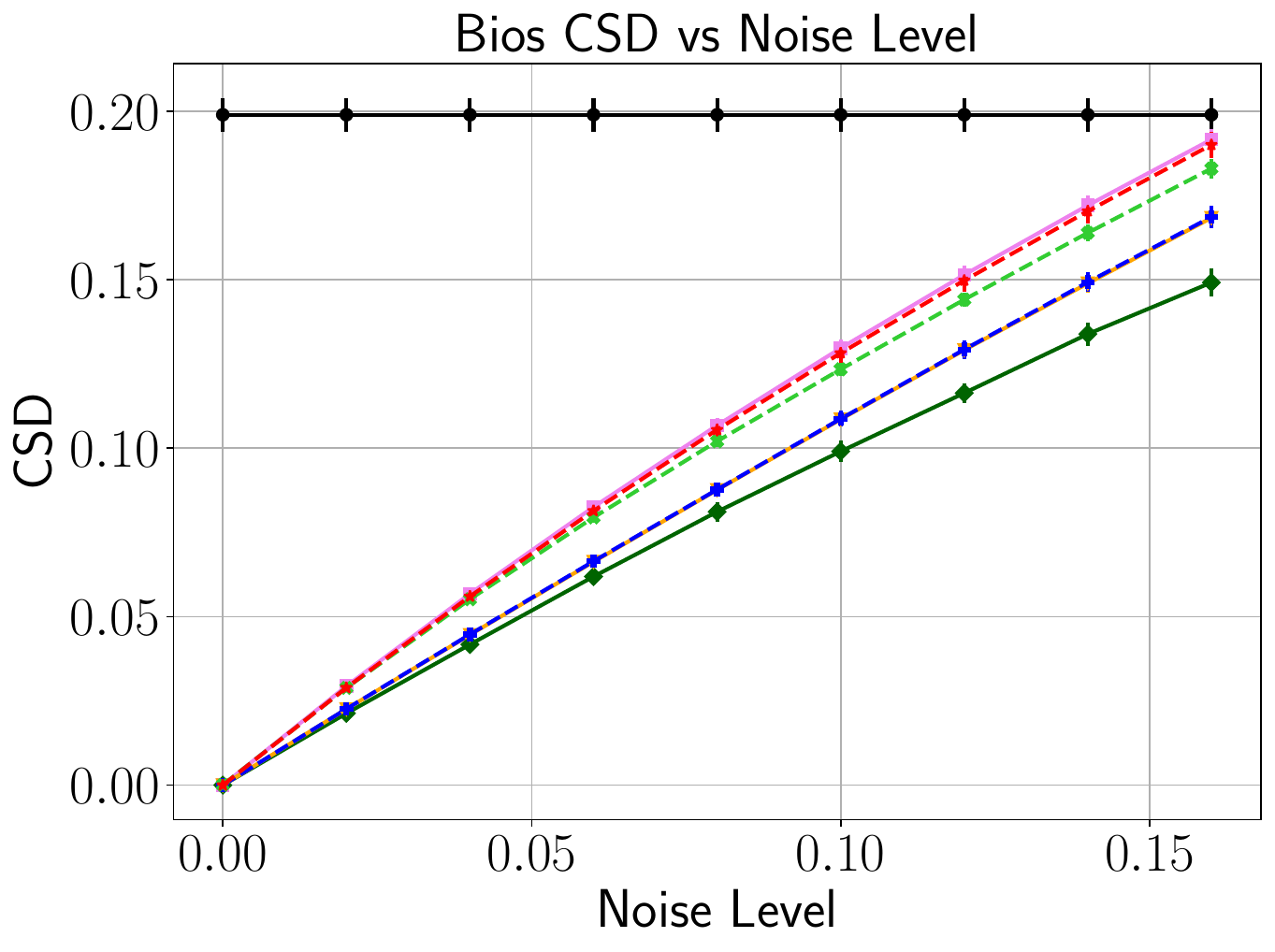}
        \caption{Bios Classification}
        \label{fig:bios-clf-csd-noise}
    \end{subfigure}
    \caption{Counterfactual Set Disparity (CSD) vs noise level in counterfactual generation. It shows that CF-CP methods show similar robustness to noise as other counterfactual fair models.}
    \label{fig:csd-vs-noise}
\end{figure*}

\subsection{Oracle Counterfactuals}
We first present results when the counterfactuals are generated from the true SCM (oracle counterfactuals)--later, we consider the case where noise is added to the exogenous variables.
Table~\ref{tab:synth-combined} shows the results on synthetic regression and classification datasets.
Table~\ref{tab:real-combined} shows the results on real datasets, Law School for regression and Bios for classification.
First, we observe that vanilla split CP achieves the target coverage but has high counterfactual set disparity (CSD) ranging from 0.19 to 0.71.
As expected, Post-hoc Union achieves counterfactual invariance for prediction sets (CSD=0), but at the cost of a significant increase in the size of the prediction sets and over-coverage due to inflation of the prediction sets.
Methods that train counterfactually fair predictors (CFU, CFR, PCF) achieve counterfactual fairness for both point predictions and prediction sets (CSD=0) but at the cost of lower accuracy/MSE and prediction sets that are moderately larger than split CP.
Our proposed CF-CP method achieves counterfactual fairness for prediction sets (CSD=0) while maintaining the accuracy/MSE of the base predictor, and it produces prediction sets that are similar in size to counterfactually fair baselines and moderately larger than split CP.
Only \texttt{max} aggregation leads to a nonzero CSD in synthetic classification, because of non-empty prediction set rule.
Also, CF-CP achieves comparable or better average set size compared to other counterfactual fairness methods (CFU, CFR, PCF) without requiring retraining or access to the probability distribution of $A$.
Among the three aggregation functions, the results are quite similar, with \texttt{mean} achieving the smallest average set size in most cases.

\subsection{Noisy Counterfactuals}
In this section, we examine the robustness of CF-CP to the quality of the available counterfactuals, since its performance necessarily depends on how accurate these counterfactuals are. 
Specifically, we study how fairness and performance change as we introduce noise into the counterfactuals.
For the datasets where we have access to the exogenous variables $U$ (synthetic datasets and Law School), we add Gaussian noise to the true exogenous variables before generating counterfactuals.
For the Bios dataset, we add Gaussian noise to the counterfactual embeddings.
The results are shown in Table~\ref{tab:noisy-combined}.
Here we only report average set size and CSD due to space constraints and the other metrics and noise variances can be found in Appendix~\ref{sec:additional-results}.
The noisy counterfactuals lead to a slight increase in CSD for all methods, but CF-CP still achieves a significant reduction in CSD compared to split CP, with only a modest increase in the size of the prediction sets.
CF-CP achieves comparable average set size and CSD trade-off compared to other counterfactual fairness methods (CFU, CFR, PCF) even with noisy counterfactuals.

To further illustrate the robustness of CF-CP to noisy counterfactuals, we plot CSD against different noise levels in Figure~\ref{fig:csd-vs-noise}.
We vary the noise level from 0 (oracle counterfactuals) to 1 for synthetic datasets and Law School, and from 0 to 0.16 for Bios, as higher noise levels lead to poor performance.
We observe that as the noise level increases, CSD also increases for all methods as expected.
However, CF-CP methods show similar robustness to noise as other counterfactual fair models (CFU, CFR, PCF) and post-hoc union, and they consistently achieve lower CSD compared to split CP across all noise levels.
This shows that CF-CP is effective in reducing counterfactual unfairness even when counterfactuals are noisy.

We emphasize two key takeaways: (i) CF-CP does not require retraining the model or access to the probability distribution of $A$, making it a more practical choice in real-world scenarios where counterfactuals may be noisy or imperfect; and (ii) as stated earlier, CF-CP does not reduce the accuracy of the baseline predictor. 
Compared to CFU, CFR and PCF, the baseline predictor for CF-CP is more accurate--see Appendix~\ref{sec:additional-results}.
\section{CONCLUSION}
\label{sec:conclusion}

In this paper, we extended the notion of counterfactual fairness to set-valued predictors and then proposed a novel post-training conformal prediction method that enforces set-level counterfactual fairness by symmetrizing the conformity scores across protected attribute interventions.
Under an invertible SCM, we showed that our method guarantees both marginal coverage and counterfactual invariance for prediction sets.
Empirically, across synthetic and real-world datasets, including Law School and Bios, CF-CP substantially reduces counterfactual set disparity compared to standard split CP, with only a modest increase in average set size, and without requiring retraining or access to the probability distribution of the protected attribute.
Our results suggest that enforcing counterfactual fairness at the set level is both feasible and effective.
Two promising directions for future work are: (i) extending this fairness notion to path-dependent counterfactual fairness~\citep{chiappa2019path}, where only certain causal pathways from the protected attribute to the outcome are considered unfair; (ii) generalizing CF-CP to dynamic settings where covariates and protected attributes evolve over time.


\bibliography{cf_cp}

\section*{Checklist}

\begin{enumerate}

  \item For all models and algorithms presented, check if you include:
  \begin{enumerate}
    \item A clear description of the mathematical setting, assumptions, algorithm, and/or model. \textbf{Yes}.
    \item An analysis of the properties and complexity (time, space, sample size) of any algorithm. \textbf{Yes}.
    \item (Optional) Anonymized source code, with specification of all dependencies, including external libraries. \textbf{Yes}.
  \end{enumerate}

  \item For any theoretical claim, check if you include:
  \begin{enumerate}
    \item Statements of the full set of assumptions of all theoretical results. \textbf{Yes}.
    \item Complete proofs of all theoretical results. \textbf{Yes}.
    \item Clear explanations of any assumptions. \textbf{Yes}.     
  \end{enumerate}

  \item For all figures and tables that present empirical results, check if you include:
  \begin{enumerate}
    \item The code, data, and instructions needed to reproduce the main experimental results (either in the supplemental material or as a URL). \textbf{Yes}.
    \item All the training details (e.g., data splits, hyperparameters, how they were chosen). \textbf{Yes}.
    \item A clear definition of the specific measure or statistics and error bars (e.g., with respect to the random seed after running experiments multiple times). \textbf{Yes}.
    \item A description of the computing infrastructure used. (e.g., type of GPUs, internal cluster, or cloud provider). \textbf{Yes}.
  \end{enumerate}

  \item If you are using existing assets (e.g., code, data, models) or curating/releasing new assets, check if you include:
  \begin{enumerate}
    \item Citations of the creator If your work uses existing assets. \textbf{Yes}.
    \item The license information of the assets, if applicable. \textbf{Not Applicable}.
    \item New assets either in the supplemental material or as a URL, if applicable. \textbf{Not Applicable}.
    \item Information about consent from data providers/curators. \textbf{Not Applicable}.
    \item Discussion of sensible content if applicable, e.g., personally identifiable information or offensive content. \textbf{Not Applicable}.
  \end{enumerate}

  \item If you used crowdsourcing or conducted research with human subjects, check if you include:
  \begin{enumerate}
    \item The full text of instructions given to participants and screenshots. \textbf{Not Applicable}.
    \item Descriptions of potential participant risks, with links to Institutional Review Board (IRB) approvals if applicable. \textbf{Not Applicable}.
    \item The estimated hourly wage paid to participants and the total amount spent on participant compensation. \textbf{Not Applicable}.
  \end{enumerate}

\end{enumerate}

\clearpage
\appendix
\thispagestyle{empty}
\onecolumn

\aistatstitle{Supplementary Materials}


\section{PROOFS}

\subsection{Proof of Lemma~\ref{lem:score-invariance}}
\label{sec:proof-lem-score-invariance}
Here we restate Lemma~\ref{lem:score-invariance} for convenience.
\begin{lemma}[Invariance of the symmetrized score]
\label{lem:score-invariance-app}
Let Assumption~\ref{assump:invertible} hold.
Fix any $y\in\mathcal{Y}$.
Then for all $a,a'\in\mathcal{A}$,
\begin{equation}
\label{eq:sym-score-invariance-app}
    s_{\mathrm{cf}}\left(X_{A\leftarrow a},a,y\right) = s_{\mathrm{cf}}\left(X_{A\leftarrow a'},a',y\right).
\end{equation}
\end{lemma}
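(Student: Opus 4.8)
The plan is to reduce both sides of \eqref{eq:sym-score-invariance-app} to the aggregation of one and the same collection of per-intervention scores, exploiting the invertibility granted by Assumption~\ref{assump:invertible}. The central observation is that feeding a counterfactual feature back through the symmetrization re-extracts the shared latent $U$, after which the attribute we started from is forgotten. First I would make the structural representation explicit: by Assumption~\ref{assump:invertible} we may write $X=g(U,A)$ with $g_a(u):=g(u,a)$ invertible, so that the counterfactual feature under $do(A=a)$ is $X_{A\leftarrow a}=g_a(U)$, where $U$ is the latent attached to the factual instance.

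Next I would substitute $x=X_{A\leftarrow a}=g_a(U)$, together with attribute $a$, into the definition of $s_{\mathrm{cf}}$ and apply the cancellation $g_a^{-1}(g_a(U))=U$. This collapses each inner argument $g_{a'}\!\left(g_a^{-1}(X_{A\leftarrow a})\right)$ to $g_{a'}(U)$ for every $a'\in\mathcal{A}$, giving
\[
s_{\mathrm{cf}}(X_{A\leftarrow a},a,y)=\mathrm{Agg}\left\{s\left(g_{a'}(U),a',y\right):a'\in\mathcal{A}\right\}.
\]
The right-hand side no longer references the starting attribute $a$: it depends only on the latent $U$ and on the full index set $\mathcal{A}$. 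Repeating the identical computation starting from $X_{A\leftarrow a'}$ with attribute $a'$ produces exactly the same collection $\left\{s\left(g_{a''}(U),a'',y\right):a''\in\mathcal{A}\right\}$. Since both expressions are $\mathrm{Agg}$ applied to the very same collection, equality follows; permutation-invariance of $\mathrm{Agg}$ is invoked only to ensure that its output is a well-defined function of this unordered collection, so that no spurious dependence on an enumeration order of $\mathcal{A}$ can arise.

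The main obstacle, and really the only nontrivial point, is the cancellation step: it requires that the latent recovered from the counterfactual pair $(X_{A\leftarrow a},a)$ coincides with the true latent $U$ of the factual instance, which is precisely what (left-)invertibility of each $g_a$ guarantees. If invertibility fails, $g_a^{-1}(g_a(U))$ need not return $U$, the two collections can differ, and the identity breaks; this is why Assumption~\ref{assump:invertible} is essential here. By contrast, the independence of $U$ and $A$ plays no role in this deterministic identity and is needed only for the probabilistic fairness statement built on top of it.
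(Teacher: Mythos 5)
Your proposal is correct and follows essentially the same route as the paper's proof: both recover the shared latent $U$ via invertibility (your cancellation $g_a^{-1}(g_a(U))=U$ is the paper's statement that the factual pair determines a unique latent), observe that the multiset $\left\{s\left(g_{a''}(U),a'',y\right):a''\in\mathcal{A}\right\}$ is therefore identical regardless of which counterfactual pair one starts from, and conclude by permutation-invariance of $\mathrm{Agg}$. Your closing remark that the independence of $U$ and $A$ is not needed for this deterministic identity is a nice observation that the paper does not make explicit, but it does not change the argument.
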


\begin{proof}
Fix $y\in\mathcal{Y}$ and $a,a'\in\mathcal{A}$. Under Assumption~\ref{assump:invertible}, for the factual pair $(X_{A\leftarrow a},a)$ there exists a unique latent
\begin{equation}
    U = g_a^{-1}\left(X_{A\leftarrow a}\right),
\end{equation}
where $g_a^{-1}$ is the inverse of the forward map $g_a$ in the SCM that generates $X$ from $U$ under intervention $A\leftarrow a$.
Likewise for $(X_{A\leftarrow a'},a')$ we have the same $U=g_{a'}^{-1}(X_{A\leftarrow a'})$ because both arise from the same structural model applied to the same exogenous noise.

For any target intervention $a''\in\mathcal{A}$, the corresponding counterfactual features are
\begin{equation}
    X_{A\leftarrow a''} = g_{a''}(U),
\end{equation}
independently of whether we start from $(X_{A\leftarrow a},a)$ or $(X_{A\leftarrow a'},a')$.
Hence the multiset of per-intervention scores
\begin{equation}
    \left\{s\left(g_{a''}(U),a'',y\right) : a''\in\mathcal{A}\right\}
\end{equation}
is identical in both cases (up to permutation). Since $\mathrm{Agg}$ in~\eqref{eq:s-sym} is permutation-invariant, applying it to these equal multisets yields the same value:
\begin{equation}
    s_{\mathrm{cf}}\left(X_{A\leftarrow a},a,y\right)=s_{\mathrm{cf}}\left(X_{A\leftarrow a'},a',y\right).
\end{equation}
This proves \eqref{eq:sym-score-invariance-app}.
\end{proof}

\subsection{Proof of Theorem~\ref{thm:exch}}
\label{sec:proof-thm-exch}
Here we restate Theorem~\ref{thm:exch} for convenience.
\begin{theorem}[Exchangeability preservation]
\label{thm:exch-app}
Assume that Assumption~\ref{assump:invertible} holds and that the calibration points $\{(X_i,A_i,Y_i)\}_{i\in\mathcal{I}_{\mathrm{cal}}}$ and the test point $(X_{n+1},A_{n+1},Y_{n+1})$ are exchangeable.
Suppose the score function $s$ and the aggregator $\mathrm{Agg}$ are measurable, and $\mathrm{Agg}$ is permutation-invariant.
Define the counterfactual symmetrization
\begin{equation*}
    s_{\mathrm{cf}}(x,a,y) = \mathrm{Agg}\left\{ s\left(x_{A\leftarrow a'},a',y\right) : a'\in\mathcal{A}\right\}.
\end{equation*}
Then, the scores
\begin{equation}
    \{S^{\mathrm{cf}}_i = s_{\mathrm{cf}}(X_i,A_i,Y_i) : i \in \mathcal{I}_{\mathrm{cal}} \cup \{n+1\}\}
\end{equation}
are exchangeable.
\end{theorem}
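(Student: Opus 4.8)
The plan is to recognize the claim as an instance of the standard fact that applying a single fixed measurable map coordinate-wise to an exchangeable sequence preserves exchangeability; the only real work is to verify that the symmetrized score $s_{\mathrm{cf}}$ genuinely \emph{is} such a fixed map of each point's observed tuple, with no index dependence and no mixing across points.

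First I would use Assumption~\ref{assump:invertible} to remove the apparent dependence on the latent noise and exhibit $s_{\mathrm{cf}}$ as an explicit deterministic function of the observed tuple. Because each $g_a$ is invertible, the counterfactual features satisfy $x_{A\leftarrow a'}=g_{a'}(g_a^{-1}(x))$, a deterministic function of $(x,a)$ alone; substituting this into~\eqref{eq:s-sym} gives
\begin{equation*}
    s_{\mathrm{cf}}(x,a,y)=\mathrm{Agg}\left\{s\left(g_{a'}(g_a^{-1}(x)),a',y\right):a'\in\mathcal{A}\right\}=:\phi(x,a,y),
\end{equation*}
a single map $\phi:\mathcal{X}\times\mathcal{A}\times\mathcal{Y}\to\mathbb{R}$ that does \emph{not} depend on the index $i$. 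Measurability of $\phi$ follows by composition: each inner term is the composition of the measurable structural maps $g_{a'},g_a^{-1}$ with the measurable score $s$, and since $\mathcal{A}$ is finite, $\phi$ is the composition of the measurable finite tuple $x\mapsto(s(g_{a'}(g_a^{-1}(x)),a',y))_{a'\in\mathcal{A}}$ with the measurable aggregator $\mathrm{Agg}$.

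Finally I would carry out the coordinate-wise transformation argument. Writing $Z_i=(X_i,A_i,Y_i)$, exchangeability means $(Z_1,\dots,Z_{n+1})\stackrel{d}{=}(Z_{\pi(1)},\dots,Z_{\pi(n+1)})$ for every permutation $\pi$ of $\{1,\dots,n+1\}$. Since $S^{\mathrm{cf}}_i=\phi(Z_i)$ with the \emph{same} $\phi$ for all $i$, the product map $\Phi(z_1,\dots,z_{n+1}):=(\phi(z_1),\dots,\phi(z_{n+1}))$ is measurable and commutes with coordinate permutations; applying $\Phi$ to both sides of the distributional equality therefore yields $(S^{\mathrm{cf}}_1,\dots,S^{\mathrm{cf}}_{n+1})\stackrel{d}{=}(S^{\mathrm{cf}}_{\pi(1)},\dots,S^{\mathrm{cf}}_{\pi(n+1)})$, which is exactly exchangeability of the scores. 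The main obstacle is conceptual rather than computational: one must argue carefully that the counterfactual generation is a deterministic, index-independent function of each point's own observed covariates (guaranteed by invertibility and by the SCM being a fixed, shared object), since a transformation that entangled different data points would break the permutation symmetry. Note that the full strength of Lemma~\ref{lem:score-invariance} is not needed here---only that $\phi$ is a well-defined fixed measurable map---so the exchangeability argument rests solely on measurability and index-independence, not on the fairness invariance itself.
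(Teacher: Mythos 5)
Your proposal is correct and follows essentially the same route as the paper's proof: both use Assumption~\ref{assump:invertible} to rewrite $s_{\mathrm{cf}}$ as a fixed, index-independent measurable map of each observed tuple $(X_i,A_i,Y_i)$ (the paper calls it $T$, you call it $\phi$), and then invoke preservation of exchangeability under identical coordinate-wise measurable transformations. The only difference is cosmetic: the paper cites Proposition~4 of \citet{kuchibhotla2020exchangeability} for that last preservation step, whereas you prove it inline by applying the product map $\Phi$ to both sides of the distributional identity and noting it commutes with coordinate permutations---a perfectly valid, self-contained substitute.
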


\begin{proof}
Let $\pi$ be any permutation of $\mathcal{I}_{\mathrm{cal}}\cup\{n+1\}$.
By assumption, $\{(X_{\pi(i)},A_{\pi(i)},Y_{\pi(i)})\}_{i}$ is distributionally identical to $\{(X_i,A_i,Y_i)\}_{i}$.

Define the coordinate-wise transformation
\begin{equation}
    T(x,a,y) := \mathrm{Agg}\left\{ s\left(g_{a'}\big(g_a^{-1}(x)\big),a',y\right) : a'\in\mathcal{A} \right\}.
\end{equation}
Under Assumption~\ref{assump:invertible}, $g_a^{-1}$ and $g_{a'}$ are deterministic and measurable for all $a,a'$, and $s$ and $\mathrm{Agg}$ are measurable. 
Hence, $T$ is measurable, and it is applied identically to each coordinate.

Set
\begin{equation}
    S_i^{\mathrm{cf}} := T(X_i,A_i,Y_i).
\end{equation}
Exchangeability is preserved under identical measurable coordinate-wise transformations by Proposition~4 of \citet{kuchibhotla2020exchangeability}, it follows that
\begin{equation}
    \{S_i^{\mathrm{cf}}\}_{i} \overset{d}{=} \{s_{\mathrm{cf}}(X_{\pi(i)},A_{\pi(i)},Y_{\pi(i)})\}_{i}.
\end{equation}
Therefore, $\{S_i^{\mathrm{cf}}\}_{i\in\mathcal{I}_{\mathrm{cal}}\cup\{n+1\}}$ are exchangeable, as claimed.
\end{proof}

\section{EXPERIMENT DETAILS}
\label{sec:experiment-details}

\subsection{Implementation Details}
\label{sec:implementation-details}
Here we provide additional implementation details.
All GPU related experiments were run on GTX 1080 Ti.
Remaining experiments were run on Apple M1 CPU.

\paragraph{Base predictor.}
For all datasets, the base predictor $\hat{f}$ is a linear model for both regression and classification tasks.
For synthetic and Law School datasets, we use \texttt{scikit-learn}'s \texttt{LinearRegression} and \texttt{LogisticRegression} implementations~\citep{scikit-learn}.
The default hyperparameters are used except for the maximum number of iterations which is set to 1000 for \texttt{LogisticRegression}.
For Bias in Bios dataset, we use \texttt{PyTorch}~\citep{paszke2019pytorch} to implement the logistic regression model and utilize GPU to speed up optimization.
We use LBFGS optimizer with a learning rate of 1 and a maximum of 100 iterations.

\paragraph{Conformal prediction sets.}
When constructing conformal prediction sets, we include at least one label in the prediction set for classification tasks if the score of the most likely label is above the threshold.
This is to avoid empty prediction sets which can happen when the model is very uncertain about the prediction.
It is a common practice in conformal prediction to ensure non-empty prediction sets.

\subsection{Datasets}
\label{sec:datasets}

Here we introduce the datasets used in our experiments.

\paragraph{Synthetic Regression.}
We follow the data generation process in \citet{zuo2023counterfactually} to generate a synthetic regression dataset with a binary protected attribute $A$.
The data is generated according to the following structural equations:
\begin{align*}
    U_i &\sim \mathcal{N}(0,1)\ \ i\in{1,2}, \quad A \sim \text{Bernoulli}(0.4), \\
    X &\leftarrow \sin(U_1)+\cos(AU_2)+A + 0.1, \\
    Y &\leftarrow 0.2X^2+1.2X+0.2 + \epsilon_Y, \quad \epsilon_Y \sim \mathcal{N}(0,0.6).
\end{align*}
The causal graph is shown in Figure~\ref{fig:causal-graph-1}.
Counterfactuals change only $A$ in the $X$-equation (holding $U_1,U_2,\epsilon_Y$ fixed).
For noisy experiments, we add i.i.d. Gaussian noise to $U$ when constructing counterfactuals, i.e., we use $\tilde{U} = U + \epsilon_U$ where $\epsilon_U \sim \mathcal{N}(0,\sigma_U^2 I)$ and $\sigma_U=0.4$ for the results in Table~\ref{tab:noisy-combined}.
We use $\alpha=0.1$ and the split of the data is as follows: 5000 samples for training, 1000 samples for calibration, and 5000 samples for testing.

\begin{figure}[h]
    \centering
    \begin{subfigure}{0.32\textwidth}
        \centering
        \begin{tikzpicture}
            \node[state] (A) {$A$};
            \node[state, right of=A] (U) {$U$};
            \node[state, below of=A] (X) {$X$};
            \node[state, below of=U] (Y) {$Y$};
            \path (A) edge [] node [above] {} (X) ;
            \path (U) edge [] node [above] {} (X) ;
            \path (X) edge [] node [above] {} (Y) ;
        \end{tikzpicture}
        \caption{Synthetic Regression and Bias}
        \label{fig:causal-graph-1}
    \end{subfigure}
    \hfill
    \begin{subfigure}{0.32\textwidth}
        \centering
        \begin{tikzpicture}
            \node[state] (A) {$A$};
            \node[state, right of=A] (U) {$U$};
            \node[state, below of=A] (X) {$X$};
            \node[state, below of=U] (Y) {$Y$};
            \path (A) edge [] node [above] {} (X) ;
            \path (U) edge [] node [above] {} (X) ;
            \path (X) edge [] node [above] {} (Y) ;
            \path (U) edge [] node [above] {} (Y) ;
        \end{tikzpicture}
        \caption{Synthetic Classification}
        \label{fig:causal-graph-2}
    \end{subfigure}
    \hfill
    \begin{subfigure}{0.32\textwidth}
        \centering
        \begin{tikzpicture}
            \node[state] (R) {\small Race};
            \node[state, right of=R] (S) {Sex};
            \node[state, below of=R] (L) {LSAT};
            \node[state, below of=S] (G) {GPA};
            \node[state, below of=L, xshift=0.75cm] (F) {FYA};
            \path (R) edge [] node [above] {} (L) ;
            \path (R) edge [] node [above] {} (G) ;
            \path (R) edge [bend right=75] node [above] {} (F) ;
            \path (S) edge [] node [above] {} (L) ;
            \path (S) edge [] node [above] {} (G) ;
            \path (S) edge [bend left=75] node [above] {} (F) ;
            \path (L) edge [] node [above] {} (F) ;
            \path (G) edge [] node [above] {} (L) ; 
            \path (G) edge [] node [above] {} (F) ;

        \end{tikzpicture}
        \caption{Law School}
        \label{fig:causal-graph-3}
    \end{subfigure}
    \caption{Causal Graphs of the Datasets Used in Experiments}
    \label{fig:causal-graphs}
\end{figure}
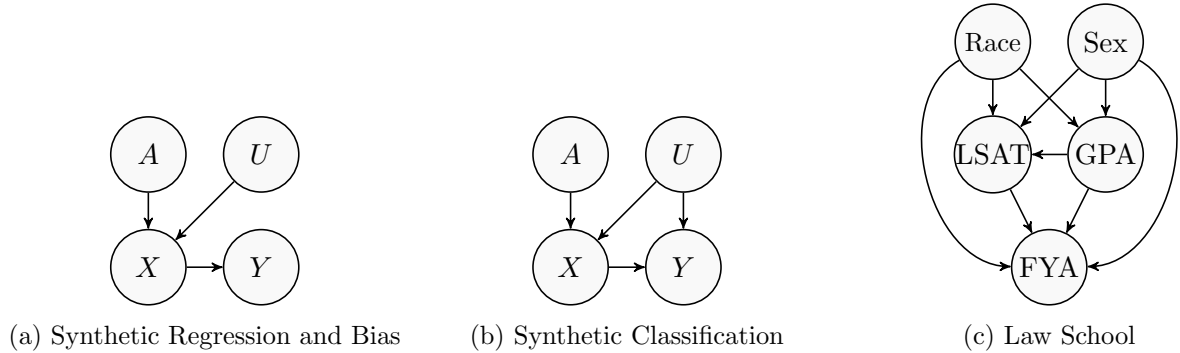

\paragraph{Synthetic Classification.}
We extend the binary classification setting in \citet{zhou2024counterfactual} to a multi-class classification setting with $K=10$ classes.
The SCM is
\begin{align*}
U &\sim \mathcal{N}(\mathbf{0}, I_d),\qquad A \sim \mathrm{Bernoulli}(0.5),\\
X &\leftarrow (A-0.5)w_A + U D_U,\\
Y &\sim \mathrm{Categorical}\left(\mathrm{softmax}(X^{\odot 3} W_X + U W_U + E)\right),\qquad E \sim \mathcal{N}(\mathbf{0}, \sigma^2 I_K).
\end{align*}
The causal graph is shown in Figure~\ref{fig:causal-graph-2}.
Here $d=10$ is the feature dimension and $K=10$ the number of classes; $w_A \in \mathbb{R}^{1\times d}$ encodes the linear effect of $A$ on $X$; $D_U \in \mathbb{R}^{d\times d}$ mixes the latent $U$ into $X$ (assumed full rank); $W_X,W_U \in \mathbb{R}^{d\times K}$ map the (nonlinear) features and latent into class logits; $E$ is i.i.d. Gaussian \emph{logit} noise with variance $\sigma=0.2$ and $X^{\odot 3}$ denotes elementwise cubing of $X$.
All the matrices are generated as identity matrix plus small uniform noise between $[0,0.2]$ added to each entry, $w_A$ has first $r=3$ entries sampled from $\mathrm{Uniform}([2, 2.2])$ and the rest are zeros.
Counterfactuals change only $A$ in the $X$-equation (holding $U$ and $E$ fixed).
Similarly, for noisy experiments, we add i.i.d. Gaussian noise to $U$ when constructing counterfactuals, i.e., we use $\tilde{U} = U + \epsilon_U$ where $\epsilon_U \sim \mathcal{N}(0,\sigma_U^2 I)$ and $\sigma_U=0.4$ for the results in Table~\ref{tab:noisy-combined}.
We use $\alpha=0.1$ and the split of the data is as follows: 5000 samples for training, 1000 samples for calibration, and 5000 samples for testing.

\paragraph{Law School.}
We use the Law School dataset from \citet{wightman1998lsac}, which has been used in prior work on counterfactual fairness~\citep{kusner2017counterfactual,zuo2023counterfactually}.
The task is to predict the first-year average grade (FYA) of law students based on their LSAT score, undergraduate GPA, race and gender.
We consider race as the protected attribute $A$ (binary: white vs non-white).
We preprocess the LSAT and GPA features by standardizing them to have zero mean and unit variance.
To construct counterfactuals, we fit a LiNGAM model to the data using the \texttt{causal-learn} package~\citep{zheng2024causal}.
The fitted causal graph is shown in Figure~\ref{fig:causal-graph-3}.
We assume that race and gender are root nodes in the causal graph and there are no children of FYA node.
For noisy experiments, we add i.i.d. Gaussian noise to $U$ when constructing counterfactuals, and we assume $U$ only effects LSAT and GPA, i.e., we use $\tilde{U} = U + \epsilon_U$ where $\epsilon_U \sim \mathcal{N}(0,\sigma_U^2 I)$ and $\sigma_U=0.4$ for the results in Table~\ref{tab:noisy-combined}.
We use $\alpha=0.1$ and the total number of samples is 21791, with a split of 1000 for calibration, 10000 for testing, and the rest for training.

\paragraph{Bias in Bios.}
We use the Bios dataset from \citet{de2019bias} which contains biographies of individuals in various professions.
The task is to predict the profession of an individual based on their biography text, it has gender labels which we consider as the protected attribute $A$ (binary: male vs female), there are 28 professions (classes) in total.
We get the embeddings of the factual and counterfactual biographies via the method proposed by~\citep{lemberger2024explaining}, using their code is publicly available.\footnote{\url{https://github.com/ToineSayan/counterfactual-representations-for-explanation}}
They assume a causal graph similar to Figure~\ref{fig:causal-graph-1} where $X$ is the text embedding and $Y$ is the profession, and $U$ is some latent variable that is not observed.
We use the version of the dataset introduced in \citet{ravfogel2020null}.
First we obtain text embeddings using a pre-trained BERT model~\citep{devlin2019bert}, then we split this embedding into two orthogonal components, one that is aligned with the protected attribute $A$ and one that is independent of $A$.
To create counterfactual embeddings, we flip the component aligned with $A$ while keeping the independent component fixed.
In this dataset, we do not have access to $U$, so we cannot apply CFU method.
For noisy experiments, we add i.i.d. Gaussian noise to the embedding when constructing counterfactuals, i.e., we use $\tilde{X}_{A\leftarrow a'} = X_{A\leftarrow a'} + \epsilon_X$ where $\epsilon_X \sim \mathcal{N}(0,\sigma_X^2 I)$ and $\sigma_X=0.1$ for the results in Table~\ref{tab:noisy-combined}.
We use $\alpha=0.025$ since the task is fairly easy, if we use $\alpha=0.1$ all methods return singletons as the prediction sets most of the time.
The total number of samples is 393423. 
We use 5000 samples for calibration, half of the dataset for testing, and the rest for training.

\section{ADDITIONAL RESULTS}
\label{sec:additional-results}

In this section, we provide additional experimental results that could not fit in the main text due to space constraints.

\subsection{Results on Noisy Counterfactuals}
\label{sec:additional-noisy-results}

Here we present additional results on the experiments with noisy counterfactuals.
In Table~\ref{tab:noisy-combined}, we report the results on all datasets with noisy counterfactuals.
We only report average set size and CSD due to space constraints, the other metrics are reported here in Table~\ref{tab:noisy-synth-combined} and Table~\ref{tab:noisy-real-combined}.
It can be seen that due to noisy counterfactuals, MSE or accuracy of counterfactually fair methods (CFU, CFR, PCF) are slightly worse than oracle counterfactuals.
This also leads to a slight increase in average set size for these methods.
However, counterfactually fair methods and CF-CP still achieve a significant reduction in CSD compared to split CP, with only a modest increase in the size of the prediction sets.
The noise level for synthetic and Law School datasets is $\sigma_U=0.4$, and for Bias in Bios dataset it is $\sigma_X=0.1$.

\begin{table*}[!t]
\centering
\caption{Synthetic results at $\alpha=0.1$ (mean $\pm$ std over 10 runs) with noisy counterfactuals. Left block: regression; right block: classification. Highlighting follows the convention described in the evaluation metrics paragraph.}
\label{tab:noisy-synth-combined}
\setlength{\tabcolsep}{4.5pt}
\renewcommand{\arraystretch}{1.05}
\resizebox{\textwidth}{!}{
\begin{tabular}{lccccc|ccccc}
\toprule
& \multicolumn{5}{c}{Regression} & \multicolumn{5}{c}{Classification} \\
\cmidrule(lr){2-6}\cmidrule(lr){7-11}
Method & MSE ($\downarrow$) & TE ($\downarrow$) & Coverage & Avg. Set Size ($\downarrow$) & CSD ($\downarrow$) & Accuracy ($\uparrow$) & TE ($\downarrow$) & Coverage & Avg. Set Size ($\downarrow$) & CSD ($\downarrow$) \\
\midrule
SplitCP & 0.377 $\pm$ 0.012 & 1.219 $\pm$ 0.014 & 0.901 $\pm$ 0.008 & 2.032 $\pm$ 0.038 & 0.713 $\pm$ 0.009 & 0.730 $\pm$ 0.011 & 0.543 $\pm$ 0.037 & 0.905 $\pm$ 0.011 & 2.025 $\pm$ 0.209 & 0.642 $\pm$ 0.043 \\
Post-hoc Union & 0.377 $\pm$ 0.012 & 1.219 $\pm$ 0.014 & 0.944 $\pm$ 0.006 & 3.259 $\pm$ 0.044 & 0.152 $\pm$ 0.001 & 0.730 $\pm$ 0.011 & 0.543 $\pm$ 0.037 & 0.935 $\pm$ 0.009 & 3.115 $\pm$ 0.254 & 0.260 $\pm$ 0.006 \\
CFU & 1.170 $\pm$ 0.020 & 0.267 $\pm$ 0.007 & 0.902 $\pm$ 0.008 & 3.551 $\pm$ 0.088 & 0.135 $\pm$ 0.004 & 0.512 $\pm$ 0.011 & 0.235 $\pm$ 0.004 & 0.908 $\pm$ 0.014 & 3.462 $\pm$ 0.239 & 0.345 $\pm$ 0.008 \\
CFR & 0.852 $\pm$ 0.014 & 0.191 $\pm$ 0.002 & 0.900 $\pm$ 0.012 & 3.013 $\pm$ 0.078 & \bftab{0.115 $\boldsymbol\pm$ 0.003} & 0.563 $\pm$ 0.012 & 0.140 $\pm$ 0.002 & 0.907 $\pm$ 0.014 & 2.913 $\pm$ 0.209 & \bftab{0.209 $\boldsymbol\pm$ 0.004} \\
PCF & 0.847 $\pm$ 0.014 & 0.189 $\pm$ 0.002 & 0.897 $\pm$ 0.010 & 2.983 $\pm$ 0.061 & \bftab{0.115 $\boldsymbol\pm$ 0.003} & 0.550 $\pm$ 0.015 & 0.165 $\pm$ 0.004 & 0.908 $\pm$ 0.012 & 2.600 $\pm$ 0.210 & 0.263 $\pm$ 0.006 \\
\midrule
CF-CP-mean & 0.377 $\pm$ 0.012 & 1.219 $\pm$ 0.014 & 0.900 $\pm$ 0.008 & 3.016 $\pm$ 0.054 & \dunderline{0.118 $\pm$ 0.002} & 0.730 $\pm$ 0.011 & 0.543 $\pm$ 0.037 & 0.908 $\pm$ 0.013 & 2.609 $\pm$ 0.203 & 0.259 $\pm$ 0.006 \\
CF-CP-max & 0.377 $\pm$ 0.012 & 1.219 $\pm$ 0.014 & 0.895 $\pm$ 0.009 & 3.374 $\pm$ 0.057 & 0.145 $\pm$ 0.003 & 0.730 $\pm$ 0.011 & 0.543 $\pm$ 0.037 & 0.945 $\pm$ 0.004 & 4.271 $\pm$ 0.185 & \dunderline{0.230 $\pm$ 0.004} \\
CF-CP-min & 0.377 $\pm$ 0.012 & 1.219 $\pm$ 0.014 & 0.905 $\pm$ 0.008 & 2.935 $\pm$ 0.050 & 0.168 $\pm$ 0.003 & 0.730 $\pm$ 0.011 & 0.543 $\pm$ 0.037 & 0.910 $\pm$ 0.012 & 2.657 $\pm$ 0.205 & 0.271 $\pm$ 0.005 \\
\bottomrule
\end{tabular}

}
\end{table*}

\begin{table*}[!t]
\centering
\caption{Real data results with noisy counterfactuals. Left block: regression on Law School dataset with $\alpha=0.1$; right block: classification on Bios dataset with $\alpha=0.025$ (mean $\pm$ std over 10 runs). Highlighting follows the convention described in the evaluation metrics paragraph.}
\label{tab:noisy-real-combined}
\setlength{\tabcolsep}{4.5pt}
\renewcommand{\arraystretch}{1.05}
\resizebox{\textwidth}{!}{
\begin{tabular}{lccccc|ccccc}
\toprule
& \multicolumn{5}{c}{Law School Regression} & \multicolumn{5}{c}{Bios Classification} \\
\cmidrule(lr){2-6}\cmidrule(lr){7-11}
Method &  MSE ($\downarrow$) & TE ($\downarrow$) & Coverage & Avg. Set Size ($\downarrow$) & CSD ($\downarrow$) & Accuracy ($\uparrow$) & TE ($\downarrow$) & Coverage & Avg. Set Size ($\downarrow$) & CSD ($\downarrow$) \\
\midrule
SplitCP & 0.758 $\pm$ 0.005 & 0.723 $\pm$ 0.019 & 0.898 $\pm$ 0.009 & 2.847 $\pm$ 0.070 & 0.405 $\pm$ 0.010 & 0.801 $\pm$ 0.001 & 0.094 $\pm$ 0.003 & 0.975 $\pm$ 0.003 & 3.311 $\pm$ 0.199 & 0.199 $\pm$ 0.005 \\
Post-hoc Union & 0.758 $\pm$ 0.005 & 0.723 $\pm$ 0.019 & 0.943 $\pm$ 0.007 & 3.571 $\pm$ 0.078 & \dunderline{0.036 $\pm$ 0.001} & 0.801 $\pm$ 0.001 & 0.094 $\pm$ 0.002 & 0.979 $\pm$ 0.002 & 3.991 $\pm$ 0.231 & 0.130 $\pm$ 0.002 \\
CFU & 0.835 $\pm$ 0.007 & 0.087 $\pm$ 0.003 & 0.900 $\pm$ 0.009 & 2.998 $\pm$ 0.063 & 0.055 $\pm$ 0.002 & - & - & - & - & - \\
CFR & 0.829 $\pm$ 0.007 & 0.045 $\pm$ 0.001 & 0.899 $\pm$ 0.009 & 2.984 $\pm$ 0.064 & \bftab{0.029 $\boldsymbol\pm$ 0.001} & 0.796 $\pm$ 0.001 & 0.045 $\pm$ 0.001 & 0.974 $\pm$ 0.003 & 3.450 $\pm$ 0.226 & \bftab{0.099 $\boldsymbol\pm$ 0.003} \\
PCF & 0.829 $\pm$ 0.007 & 0.056 $\pm$ 0.001 & 0.897 $\pm$ 0.010 & 2.961 $\pm$ 0.071 & 0.037 $\pm$ 0.001 & 0.792 $\pm$ 0.001 & 0.049 $\pm$ 0.001 & 0.975 $\pm$ 0.002 & 3.560 $\pm$ 0.197 & \dunderline{0.109 $\pm$ 0.002} \\
\midrule
CF-CP-mean & 0.758 $\pm$ 0.005 & 0.723 $\pm$ 0.019 & 0.899 $\pm$ 0.014 & 3.108 $\pm$ 0.108 & \bftab{0.029 $\boldsymbol\pm$ 0.001} & 0.801 $\pm$ 0.001 & 0.093 $\pm$ 0.003 & 0.975 $\pm$ 0.003 & 3.581 $\pm$ 0.204 & \dunderline{0.109 $\pm$ 0.002} \\
CF-CP-max & 0.758 $\pm$ 0.005 & 0.723 $\pm$ 0.019 & 0.900 $\pm$ 0.014 & 3.114 $\pm$ 0.112 & 0.041 $\pm$ 0.001 & 0.801 $\pm$ 0.000 & 0.094 $\pm$ 0.003 & 0.975 $\pm$ 0.003 & 3.813 $\pm$ 0.210 & 0.123 $\pm$ 0.002 \\
CF-CP-min & 0.758 $\pm$ 0.005 & 0.723 $\pm$ 0.019 & 0.899 $\pm$ 0.012 & 3.107 $\pm$ 0.093 & 0.041 $\pm$ 0.001 & 0.801 $\pm$ 0.001 & 0.093 $\pm$ 0.003 & 0.975 $\pm$ 0.002 & 3.599 $\pm$ 0.165 & 0.128 $\pm$ 0.003 \\
\bottomrule
\end{tabular}
}
\end{table*}

\subsection{Results on Synthetic Classification with Different Base Score Functions}
\label{sec:additional-synth-classification-results}

Here we provide additional results on synthetic classification with different base score functions.
In addition to LAC (used in the main text), we also consider the following two base score functions. 
The first one is Adaptive Prediction Sets (APS) score~\citep{romano2020classification}:
\begin{equation}
    s_{\mathrm{APS}}(x,a,y) = \sum_{y'\in\mathcal{Y}, \hat{f}_{y'}(x,a) > \hat{f}_{y}(x,a)} \hat{f}_{y'}(x,a).
\end{equation}
We use APS score without additional randomness for simplicity.
The second one is the regularized APS (RAPS) score, which adds a regularization term to the APS score to penalize large prediction sets~\citep{angelopoulos2021uncertainty}:
\begin{equation}
    s_{\mathrm{RAPS}}(x,a,y) = s_{\mathrm{APS}}(x,a,y) + \lambda (\text{rank}(\hat{f}_{y}(x,a)) - k_{\mathrm{reg}})_+,
\end{equation}
where $\lambda>0$ is the regularization parameter, $k_{\mathrm{reg}}$ is a threshold parameter, and $\text{rank}(\hat{f}(x,a,y))$ is the rank of the true label $y$ in the sorted list of predicted probabilities.
We set $\lambda=0.5$ and $k_{\mathrm{reg}}=2$ in our experiments.

\begin{table*}[t]
\centering
\caption{Synthetic classification results at $\alpha=0.1$ (mean $\pm$ std over 10 runs) with APS and RAPS base scores with noiseless counterfactuals. Highlighting follows the convention described in the evaluation metrics paragraph. Since we do not use random sets for APS and RAPS, we relax the condition of achieving target coverage.}
\label{tab:aps-raps-synth}
\setlength{\tabcolsep}{4.5pt}
\renewcommand{\arraystretch}{1.05}
\resizebox{\textwidth}{!}{
\begin{tabular}{lccccc|ccccc}
\toprule
& \multicolumn{5}{c}{APS} & \multicolumn{5}{c}{RAPS} \\
\cmidrule(lr){2-6}\cmidrule(lr){7-11}
Method &  Accuracy ($\uparrow$) & TE ($\downarrow$) & Coverage & Avg. Set Size ($\downarrow$) & CSD ($\downarrow$) & Accuracy ($\uparrow$) & TE ($\downarrow$) & Coverage & Avg. Set Size ($\downarrow$) & CSD ($\downarrow$) \\
\midrule
SplitCP & 0.730 $\pm$ 0.011 & 0.543 $\pm$ 0.037 & 0.962 $\pm$ 0.006 & 3.517 $\pm$ 0.169 & 0.583 $\pm$ 0.032 & 0.730 $\pm$ 0.011 & 0.543 $\pm$ 0.037 & 0.905 $\pm$ 0.011 & 2.393 $\pm$ 0.280 & 0.604 $\pm$ 0.048 \\
Post-hoc Union & \bftab{0.730 $\boldsymbol\pm$ 0.011} & 0.543 $\pm$ 0.037 & 0.979 $\pm$ 0.004 & 5.021 $\pm$ 0.238 & 0.000 $\pm$ 0.000 & \bftab{0.730 $\boldsymbol\pm$ 0.011} & 0.543 $\pm$ 0.037 & 0.940 $\pm$ 0.010 & 3.559 $\pm$ 0.316 & 0.000 $\pm$ 0.000 \\
CFU & \dunderline{0.589 $\pm$ 0.017} & 0.000 $\pm$ 0.000 & 0.932 $\pm$ 0.006 & 3.695 $\pm$ 0.126 & 0.000 $\pm$ 0.000 & \dunderline{0.589 $\pm$ 0.017} & 0.000 $\pm$ 0.000 & 0.907 $\pm$ 0.014 & 2.973 $\pm$ 0.183 & 0.000 $\pm$ 0.000 \\
CFR & \dunderline{0.589 $\pm$ 0.017} & 0.000 $\pm$ 0.000 & 0.931 $\pm$ 0.006 & 3.685 $\pm$ 0.134 & 0.000 $\pm$ 0.000 & \dunderline{0.589 $\pm$ 0.017} & 0.000 $\pm$ 0.000 & 0.907 $\pm$ 0.014 & 2.973 $\pm$ 0.182 & 0.000 $\pm$ 0.000 \\
PCF & 0.571 $\pm$ 0.017 & 0.000 $\pm$ 0.000 & 0.928 $\pm$ 0.008 & \bftab{3.487 $\boldsymbol\pm$ 0.145} & 0.000 $\pm$ 0.000 & 0.571 $\pm$ 0.017 & 0.000 $\pm$ 0.000 & 0.906 $\pm$ 0.014 & \bftab{2.859 $\pm$ 0.241} & 0.000 $\pm$ 0.000 \\
\midrule
CF-CP-mean & \bftab{0.730 $\boldsymbol\pm$ 0.011} & 0.543 $\pm$ 0.037 & 0.941 $\pm$ 0.004 & \dunderline{3.540 $\pm$ 0.119} & 0.017 $\pm$ 0.005 & \bftab{0.730 $\boldsymbol\pm$ 0.011} & 0.543 $\pm$ 0.037 & 0.927 $\pm$ 0.009 & 3.298 $\pm$ 0.160 & 0.025 $\pm$ 0.006 \\
CF-CP-max & \bftab{0.730 $\boldsymbol\pm$ 0.011} & 0.543 $\pm$ 0.037 & 0.947 $\pm$ 0.005 & 4.130 $\pm$ 0.127 & 0.028 $\pm$ 0.005 & \bftab{0.730 $\boldsymbol\pm$ 0.011} & 0.543 $\pm$ 0.037 & 0.942 $\pm$ 0.008 & 3.986 $\pm$ 0.147 & 0.034 $\pm$ 0.007 \\
CF-CP-min & \bftab{0.730 $\boldsymbol\pm$ 0.011} & 0.543 $\pm$ 0.037 & 0.944 $\pm$ 0.004 & 3.690 $\pm$ 0.139 & 0.018 $\pm$ 0.004 & \bftab{0.730 $\boldsymbol\pm$ 0.011} & 0.543 $\pm$ 0.037 & 0.912 $\pm$ 0.009 & \dunderline{2.948 $\pm$ 0.126} & 0.003 $\pm$ 0.004 \\
\bottomrule
\end{tabular}
}
\end{table*}

\begin{table*}[t]
\centering
\caption{Synthetic classification results at $\alpha=0.1$ (mean $\pm$ std over 10 runs) with APS and RAPS base scores with noisy counterfactuals, where the noise level is $\sigma_U=0.4$.  Highlighting follows the convention described in the evaluation metrics paragraph.}
\label{tab:aps-raps-synth-noisy}
\setlength{\tabcolsep}{4.5pt}
\renewcommand{\arraystretch}{1.05}
\resizebox{\textwidth}{!}{
\begin{tabular}{lccccc|ccccc}
\toprule
& \multicolumn{5}{c}{APS} & \multicolumn{5}{c}{RAPS} \\
\cmidrule(lr){2-6}\cmidrule(lr){7-11}
Method &  Accuracy ($\uparrow$) & TE ($\downarrow$) & Coverage & Avg. Set Size ($\downarrow$) & CSD ($\downarrow$) & Accuracy ($\uparrow$) & TE ($\downarrow$) & Coverage & Avg. Set Size ($\downarrow$) & CSD ($\downarrow$) \\
\midrule
SplitCP & 0.730 $\pm$ 0.011 & 0.543 $\pm$ 0.037 & 0.962 $\pm$ 0.006 & 3.517 $\pm$ 0.169 & 0.583 $\pm$ 0.032 & 0.730 $\pm$ 0.011 & 0.543 $\pm$ 0.037 & 0.905 $\pm$ 0.011 & 2.393 $\pm$ 0.280 & 0.604 $\pm$ 0.048 \\
Post-hoc Union & 0.730 $\pm$ 0.011 & 0.543 $\pm$ 0.037 & 0.979 $\pm$ 0.004 & 4.945 $\pm$ 0.232 & \dunderline{0.247 $\pm$ 0.008} & 0.730 $\pm$ 0.011 & 0.543 $\pm$ 0.037 & 0.938 $\pm$ 0.011 & 3.580 $\pm$ 0.322 & 0.258 $\pm$ 0.009 \\
CFU & 0.512 $\pm$ 0.011 & 0.235 $\pm$ 0.004 & 0.916 $\pm$ 0.012 & 4.007 $\pm$ 0.207 & 0.363 $\pm$ 0.010 & 0.512 $\pm$ 0.011 & 0.235 $\pm$ 0.004 & 0.904 $\pm$ 0.011 & 3.635 $\pm$ 0.239 & 0.343 $\pm$ 0.012 \\
CFR & 0.563 $\pm$ 0.012 & 0.140 $\pm$ 0.002 & 0.928 $\pm$ 0.008 & 3.790 $\pm$ 0.153 & \bftab{0.238 $\boldsymbol\pm$ 0.005} & 0.563 $\pm$ 0.012 & 0.140 $\pm$ 0.002 & 0.909 $\pm$ 0.012 & 3.219 $\pm$ 0.173 & \bftab{0.212 $\boldsymbol\pm$ 0.004} \\
PCF & 0.550 $\pm$ 0.015 & 0.165 $\pm$ 0.004 & 0.923 $\pm$ 0.010 & 3.484 $\pm$ 0.154 & 0.298 $\pm$ 0.008 & 0.550 $\pm$ 0.015 & 0.165 $\pm$ 0.004 & 0.907 $\pm$ 0.014 & 2.913 $\pm$ 0.244 & 0.278 $\pm$ 0.012 \\
\midrule
CF-CP-mean & 0.730 $\pm$ 0.011 & 0.543 $\pm$ 0.037 & 0.945 $\pm$ 0.005 & 3.634 $\pm$ 0.118 & 0.303 $\pm$ 0.011 & 0.730 $\pm$ 0.011 & 0.543 $\pm$ 0.037 & 0.932 $\pm$ 0.007 & 3.511 $\pm$ 0.112 & \dunderline{0.228 $\pm$ 0.006} \\
CF-CP-max & 0.730 $\pm$ 0.011 & 0.543 $\pm$ 0.037 & 0.952 $\pm$ 0.005 & 4.661 $\pm$ 0.203 & 0.254 $\pm$ 0.007 & 0.730 $\pm$ 0.011 & 0.543 $\pm$ 0.037 & 0.948 $\pm$ 0.006 & 4.410 $\pm$ 0.174 & 0.231 $\pm$ 0.008 \\
CF-CP-min & 0.730 $\pm$ 0.011 & 0.543 $\pm$ 0.037 & 0.947 $\pm$ 0.005 & 3.751 $\pm$ 0.126 & 0.313 $\pm$ 0.011 & 0.730 $\pm$ 0.011 & 0.543 $\pm$ 0.037 & 0.911 $\pm$ 0.010 & 2.991 $\pm$ 0.151 & 0.285 $\pm$ 0.017 \\
\bottomrule
\end{tabular}
}
\end{table*}

The results are shown in Table~\ref{tab:aps-raps-synth} for oracle counterfactuals and Table~\ref{tab:aps-raps-synth-noisy} for noisy counterfactuals.
The results are consistent with those using LAC score in the main text.
APS score tends to produce larger prediction sets compared to LAC score, while RAPS score produces smaller prediction sets thanks to the regularization.
Since we use deterministic version of APS, the average coverage is slightly larger than the target coverage value for all the methods.
Here again counterfactually fair methods (CFU, CFR, PCF) and CF-CP methods achieve a significant reduction in CSD compared to split CP, with only a modest increase in the size of the prediction sets.
In the noiseless case, CF-CP methods has CSD slightly larger than 0, which is due to the constraint that the prediction sets must include at least one label.
For completeness, we also report how CSD changes with different noise levels in Figure~\ref{fig:csd-vs-noise-aps-raps} for APS and RAPS scores.
The results are consistent with those using LAC score in the main text.
When the noise level is higher, all methods have higher CSD, but counterfactually fair methods and CF-CP methods still achieve a significant reduction in CSD compared to split CP.

\begin{figure*}[h]
    \centering
    \begin{subfigure}{0.5\textwidth}
        \centering
        \includegraphics[width=0.7\textwidth]{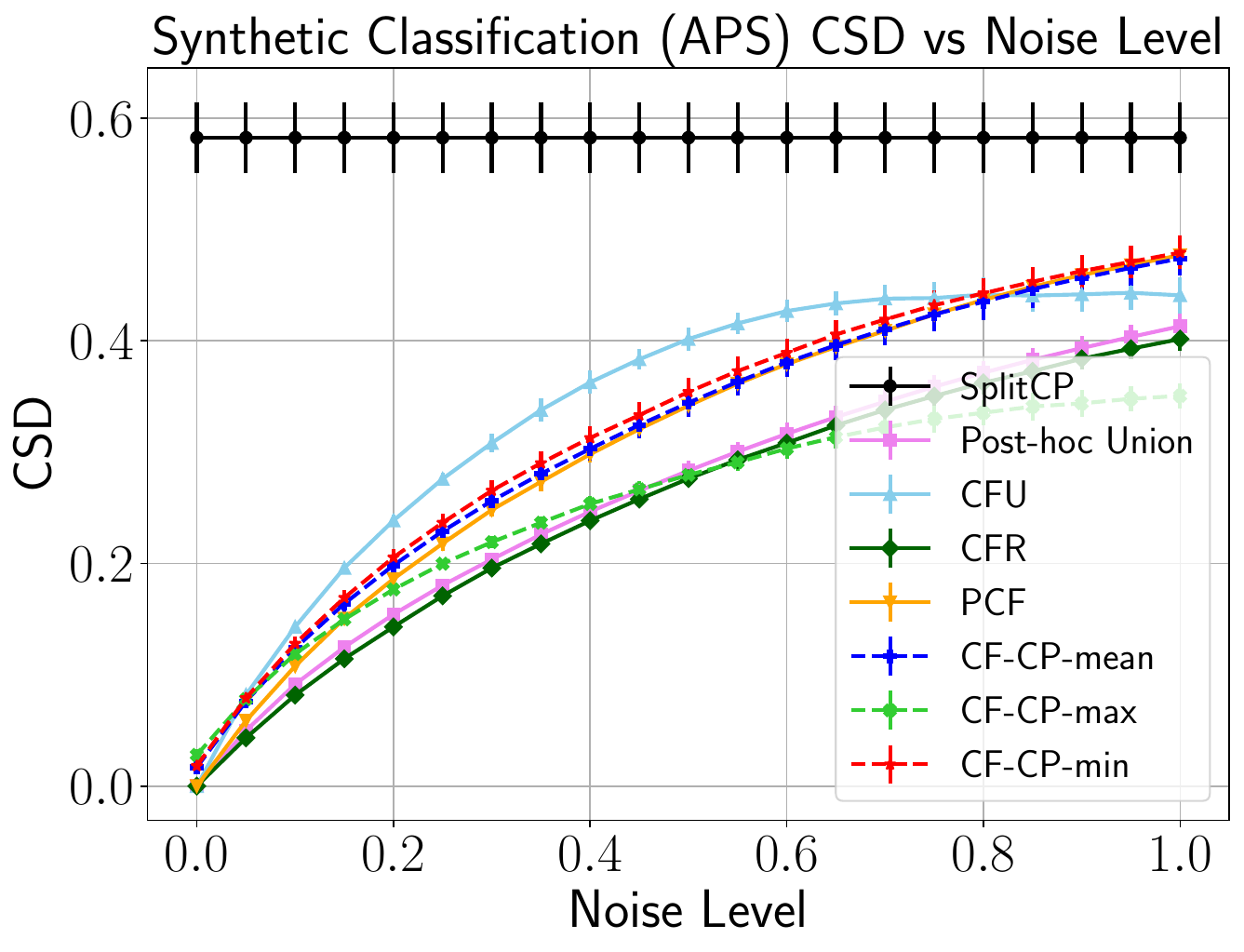}
        \caption{Synthetic Classification with APS}
        \label{fig:synth-aps-csd-noise}
    \end{subfigure}%
    \begin{subfigure}{0.5\textwidth}
        \centering
        \includegraphics[width=0.7\textwidth]{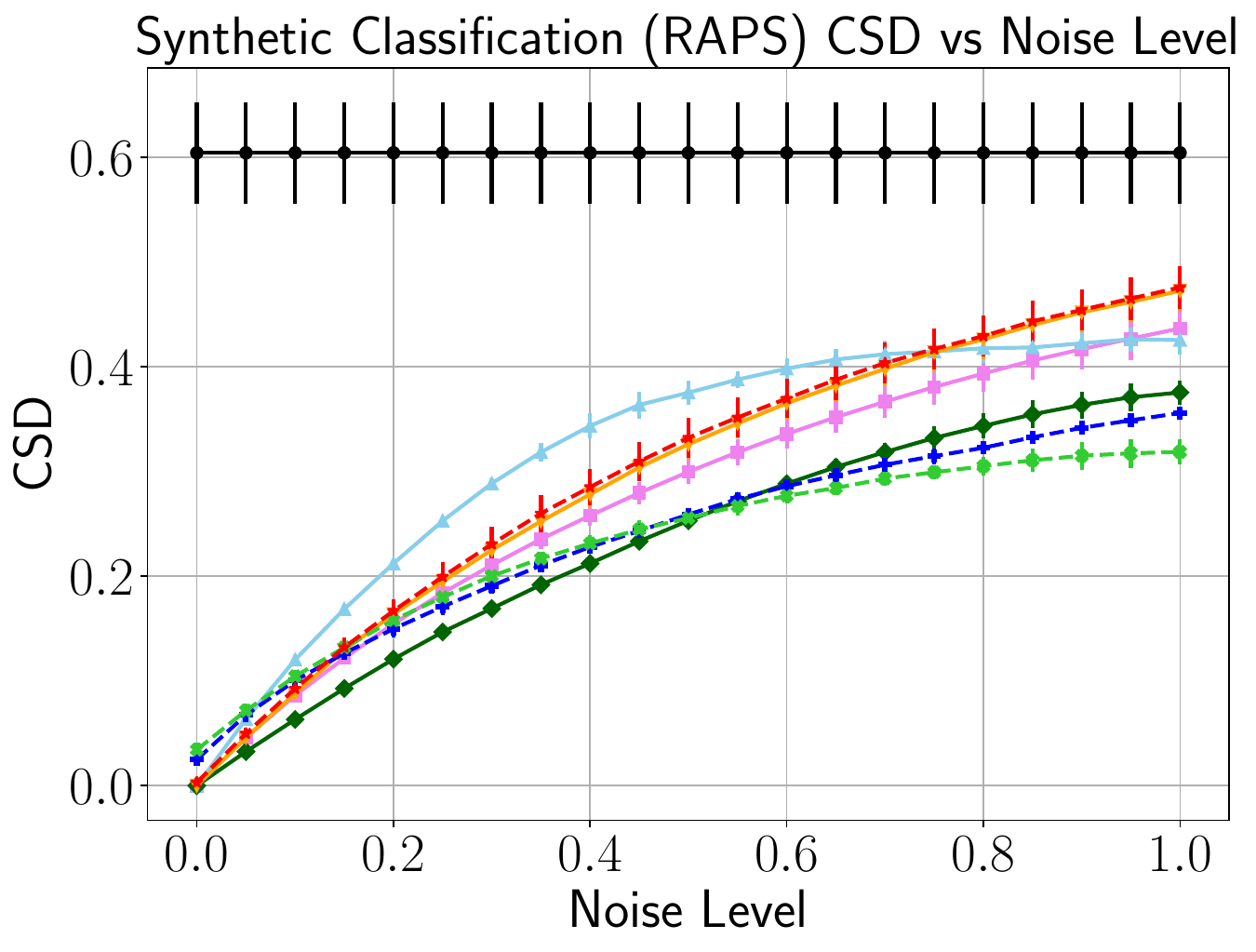}
        \caption{Synthetic Classification with RAPS}
        \label{fig:synth-raps-csd-noise}
    \end{subfigure}
    \caption{Counterfactual Set Disparity (CSD) vs noise level in counterfactual generation for APS(left) and RAPS (right) base scores on synthetic classification dataset (mean over 10 runs).}
    \label{fig:csd-vs-noise-aps-raps}
\end{figure*}

\end{document}